\newtheorem{assumption}{Assumption~}[section]
\newcommand{\bR}{\mathbb{R}}
\newcommand{\bS}{\mathbb{S}}
\newcommand{\bE}{\mathbb{E}}
\newcommand{\ba}{\bm{a}}
\newcommand{\bb}{\bm{b}}
\newcommand{\bx}{\bm{x}}
\newcommand{\by}{\mathbf{y}}
\newcommand{\bu}{\mathbf{u}}
\newcommand{\bv}{\mathbf{v}}
\newcommand{\cK}{\mathcal{K}}
\renewcommand{\SS}{\mathbb{S}}
\newcommand{\RR}{\mathbb{R}}
\title[Slow Deterioration]{The Slow Deterioration of the Generalization Error of the\\ Random Feature Model}
\begin{document}

\maketitle

\begin{abstract}
The random feature model exhibits a kind of resonance behavior
when the number of parameters is close to the training sample size. 
This behavior is characterized by the appearance of large generalization gap, and is 
due to the occurrence of very small eigenvalues for the associated Gram matrix.
In this paper, we examine the dynamic behavior of the gradient descent algorithm in this regime.
We show, both theoretically and experimentally, that there is a dynamic self-correction mechanism at work: 
The larger the eventual generalization gap, the slower it develops, 
{both because of the small eigenvalues.} 
This gives us ample time to stop the training process
and obtain solutions with good generalization property. 
\end{abstract}
\vspace*{2mm}
\begin{keywords}%
Random feature model, gradient descent, generalization error, early stopping, Gram matrix
\end{keywords}

\section{Introduction}\label{sec:intro}

The properties of a machine learning model are largely controlled by its two most important parameters: the number of parameters $m$
and the training sample size $n$.  Of particular interest is when $m=n$,  the transition point between under- and over-parametrized regimes.
It has been reported that for some machine learning models, 
 the generalization error undergoes a ``double descent''  transition at this point:
 Roughly speaking, it increases as a function of $\gamma= m/n$ 
 before $\gamma=1$ and decreases  after $\gamma=1$ \citep{belkin2018reconciling}.  In particular, the generalization error is abnormally large around
 $\gamma =1$.

Figure \ref{fig: mnist-1} illustrates the result of fitting the  MNIST dataset using random feature models. Gradient descent (GD) initialized at $0$
 is used to train these models.
The blue dashed line is the test error for minimum-norm solutions, i.e. the GD solutions at $t=\infty$. 
As expected this curve shows a ``double descent'' behavior,  with a peak around $m=n$ \citep{belkin2018reconciling}.  
%Intuitively, the blow-up of the generalization error for minimum-norm solution is caused by the occurrence of small eigenvalues around $\gamma=1$. 
Shown in the red dashed line are the smallest eigenvalues of the associated Gram matrices as a function of  the number of features $m$. 
One can see that the  test error shows a  very strong negative correlation with the size of the smallest eigenvalues. 

\begin{figure}[!h]
\centering
\subfigure[]{%
    \label{fig: mnist-1}%
    \includegraphics[width=0.47\textwidth]{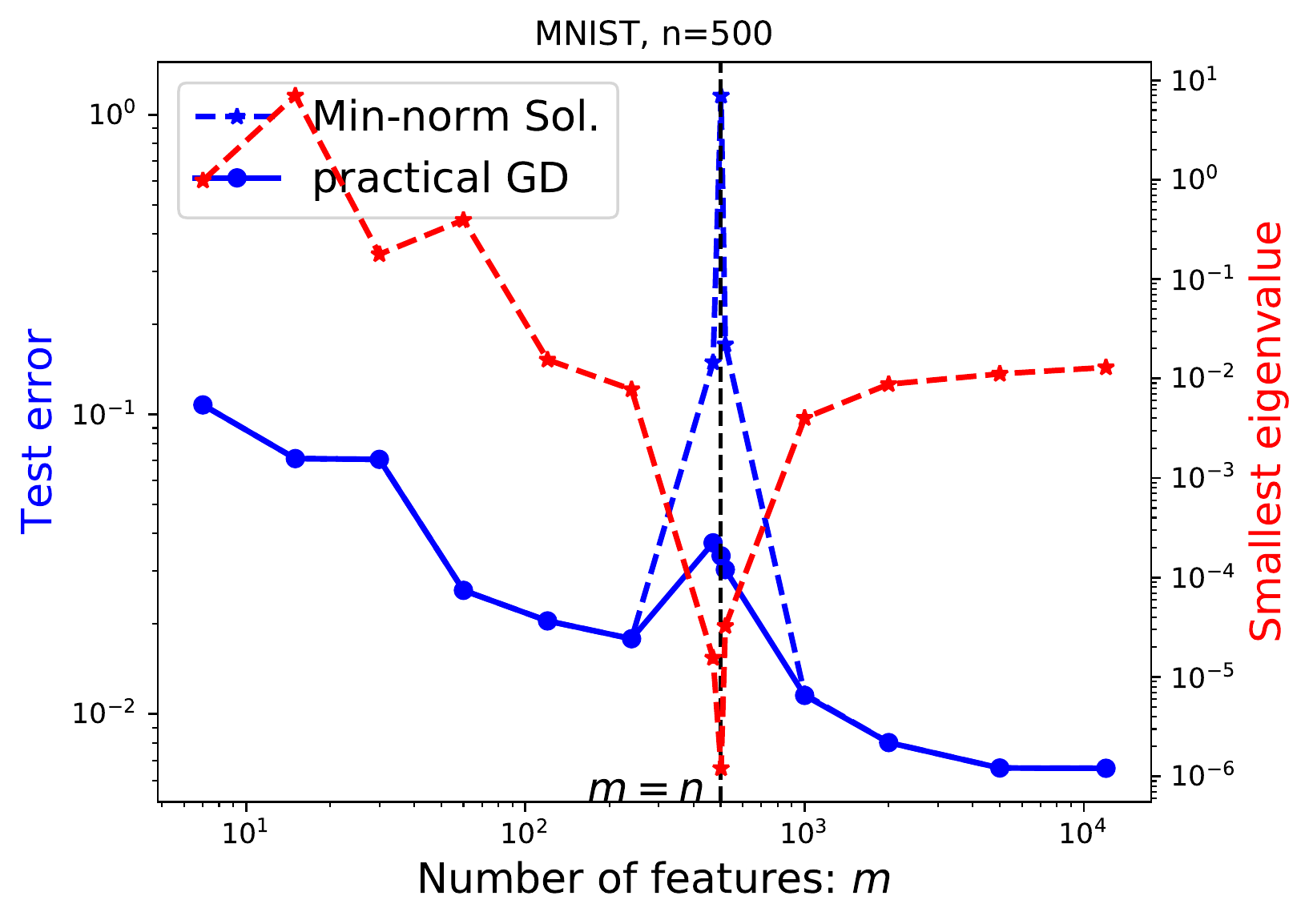}
}%
\subfigure[]{%
    \label{fig: mnist-2}%
    \includegraphics[width=0.42\textwidth]{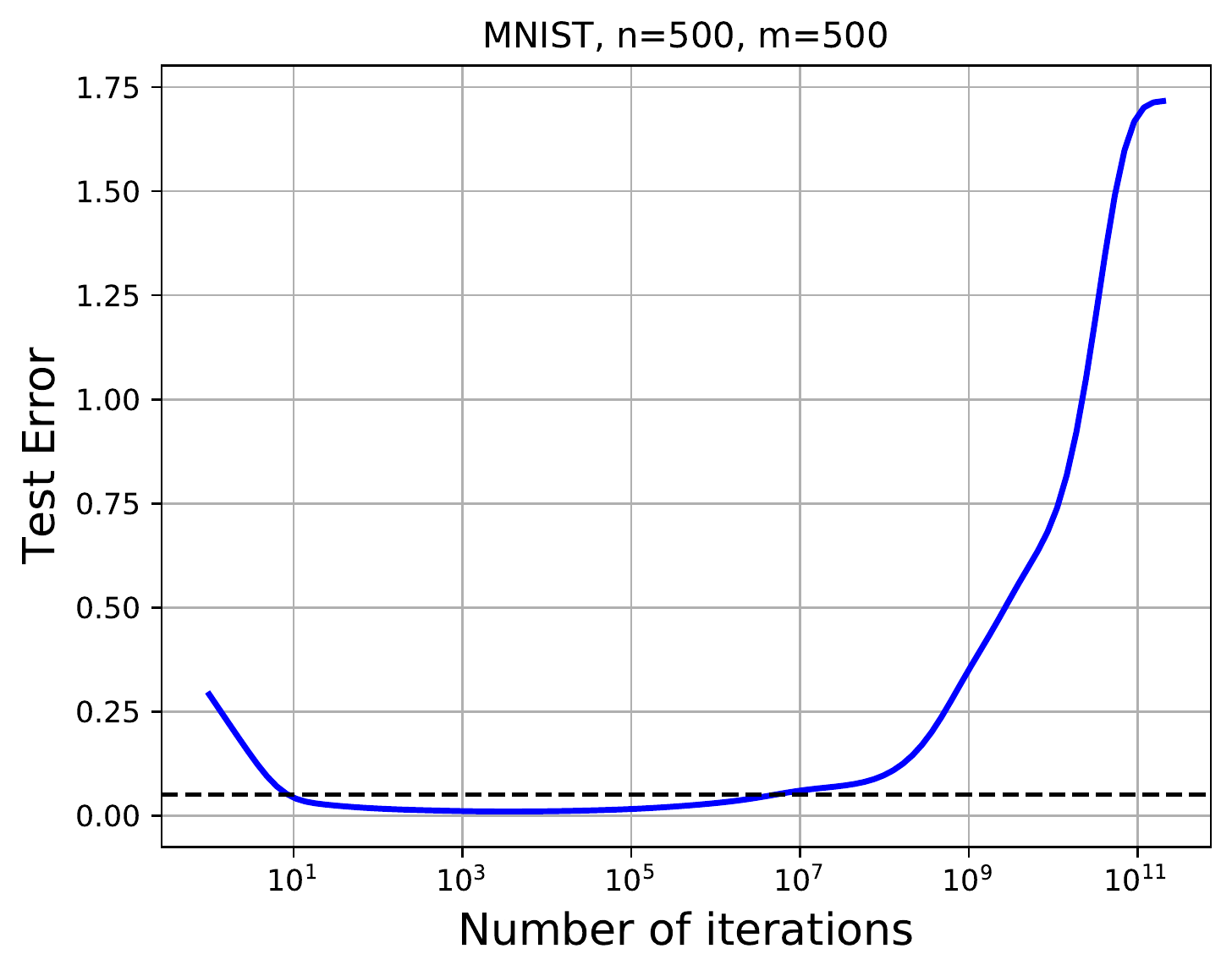}
}
\\
\subfigure[]{%
    \label{fig: mnist-3}
    \includegraphics[width=0.43\textwidth]{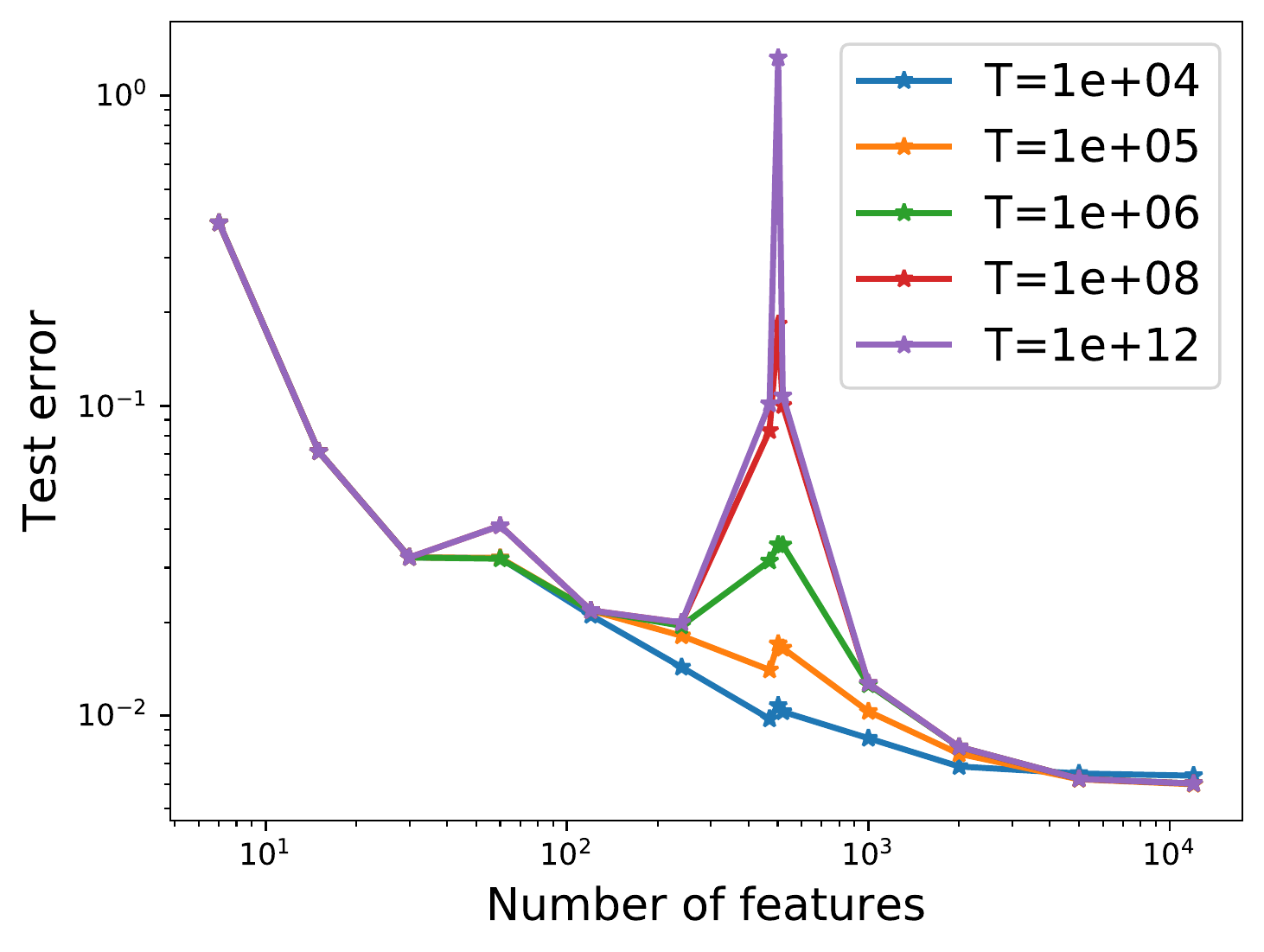}
}
\vspace*{-3mm}
\caption{\small Fitting the MNIST dataset using the random feature model $f(\bx;\bm{a})=\sum_{j=1}^m a_j \sigma(\bb_j^T \bx+c_j)$, with $(\bb_j,c_j)$ independently drawn from the uniform distribution over the $\ell_2$ unit sphere $\SS^d$, and $\sigma(t)=\max(0,t)$. Here  $d=784$, and we randomly chose $n=500$ samples from the class $0$ and $1$ to form the new training set. The learning rate of GD is $\eta=1/\lambda_{\max}$ , where $\lambda_{\max}$ is the largest eigenvalue of the associated Gram matrix.
(a) Test errors  and the smallest eigenvalues of the associated Gram matrices for different values of $m$. The blue dashed line shows test error of  the minimum-norm solutions. The blue solid line shows the result of the  GD solution found after $10^6$ iterations. (b)  Test error along the  GD path for $m=n=500$. The black dashed line corresponds to test error $0.05$. (c) The test error of GD solutions obtained by running different number of iterations.
}
\label{fig: mnist}
\end{figure}

Of more interest is the blue solid line
 in Figure \ref{fig: mnist-1}, which shows the test error of  ``practical'' GD solutions,  found by running GD for one million iterations. Roughly speaking, this curve shows a decreasing behavior as a function of $m$, with only a very small bump detected at $\gamma =1$.
   Note that no delicate early stopping strategy is used---we simply ran GD for one million iterations. 
   The reason behind this can be seen from  Figure \ref{fig: mnist-1} (b) which shows the test error along the GD dynamics at $\gamma=1$.
   One can see that over a very long period of time interval, the test error remains small before it finally increases to a very large value. 
   Another way to see this is shown in Figure \ref{fig: mnist-3}: the test error curves of GD solutions behave nicely for $T=10^4,10^5,10^6$. The ``singularity'' at $\gamma=1$ can only be seen for $T\geq 10^8$.

   The reason behind all these is the small eigenvalues of the Gram matrix. These small eigenvalues cause the test error of the minimum-norm
   solution to nearly blow up at $\gamma=1$. But at the same time, they also give rise to very slow dynamics for their effect to set in, i.e.
   it takes a long time for the large generalization errors  to develop dynamically.
A practitioner will likely terminate the iteration process before these large error starts to take an effect.

%This effect is most extreme at $\gamma=1$.  
%Away from thepoint, the smallest eigenvalues become larger.
%As a result, both effects, the large generalization error of the minimum-norm solution and the slow dynamics, are alleviated.
Another way to say this is that the effect of the ``double descent'' is washed out by the dynamics, as shown in the solid blue curve
in Figure \ref{fig: mnist-1} (a).

 In the following sections, we show that the {\bf slow deterioration} feature of the GD dynamics happens in more general cases (for general $\gamma$ and choice of kernels)
 and provide a theoretical analysis of this phenomenon. The analysis is based on the expansion of GD dynamics in the spectral space of the associated kernel operator. 
 The eventual poor generalization behavior is caused by the large norm of the solution, which in turn is caused by the GD dynamics
trying to finish the last mile for fitting the training data by resorting to the subspace of eigenvectors corresponding to the small
eigenvalues. This is indeed a form of over-fitting.  However, this over-fitting is not caused by over-parametrization, but rather 
by the GD dynamics trying to do its best to fit the training data.

\section{Related work}\label{sec:related}

The double descent phenomenon (also called jamming transition) was experimentally pointed out in~\citep{advani2017high,belkin2018reconciling,spigler2018jamming}. 
In~\citep{belkin2019two}, several simple models are analyzed to explain the double descent phenomenon. A more detailed analysis was carried out in \citep{mei2019generalization} with precise asymptotic results.
%computation of the generalization error using random matrix theory. The double descent phenomenon is also revealed by their results. 
Analysis of other models can be found  in~\citep{deng2019model,kini2020analytic}. However, these theoretical results do not consider the generalization error of solutions along the GD trajectory. 

As far as the GD dynamics is concerned, the existing literature \citep{yao2007early,suggala2018connecting,ma2019comparative,carratino2018learning} showed that there exist generalizable solutions along the GD path, which can be found by early stopping according to some specific rules.  As a comparison, our analysis of the slow deterioration phenomenon shows that there is a large interval in which the training process can be stoped without deteriorating the test accuracy.

%We emphasize that in the existing work on the generalization of GD solutions, usually either early stopping solution or the final solution is studied. Different from those existing work, our paper studies the generalization of the whole GD trajectory, and reveal the dynamical change of the generalization error along the trajectory. In this sense, 
Also relevant to this work is the ``frequency principle''  investigated in \citep{xu2018understanding,xu2019frequency,zhang2019explicitizing}.
It was observed that for random feature and neural network models, GD learns low-frequency components faster than high-frequency components. This can be viewed as the general mechanism behind the slow deterioration phenomenon.  

%Moreover, our work is  closely related to \cite{advani2017high}, which provided an analysis of the generalization error of GD solution for a linear regression model with the assumptions that both the  input date and noise are independently drawn from Gaussian distributions. 
The slow deterioration phenomenon was also observed in a simpler setting in  \citep{advani2017high}, but the analysis of \citep{advani2017high} still focused on optimal early stopping, for example how the optimal stopping time depends on the signal-to-noise ratio. Furthermore, their results cannot be directly applied in our setting, since the feature maps do not follow the Gaussian distribution assumption in that work. 
%In addition, we assume that the label $y$ is clean, and the essential noise in our setting comes from the approximation error, which is obviously not Gaussian and even correlated to the features.

\section{Experimental study}\label{sec:more_res}
\begin{figure}[!h]
\centering
\includegraphics[width=\textwidth]{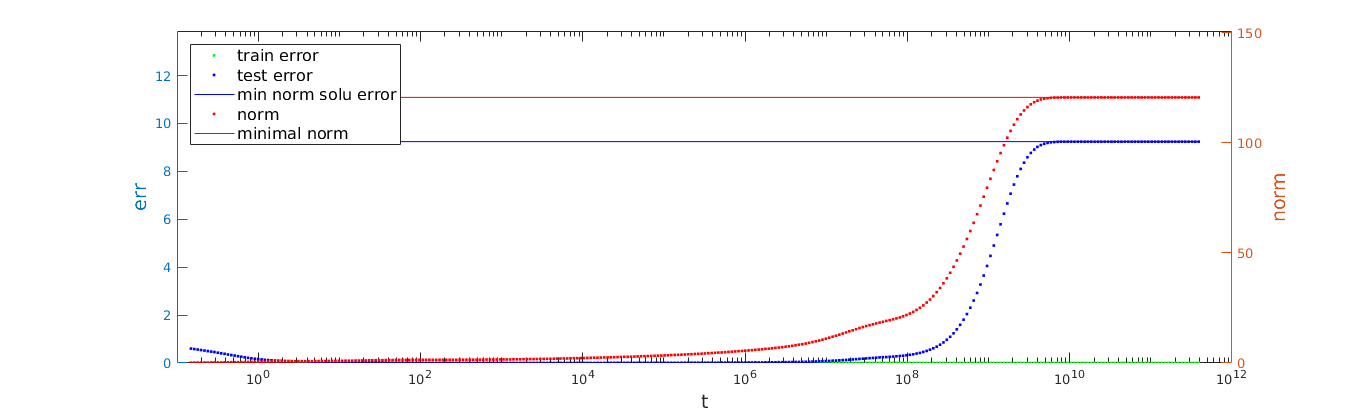}
\caption{\small The training error, testing error and the norm of the solution along the GD trajectory, initialized at zero. The random feature model
 is considered, with the feature being $\sigma(\bb^T\bx)$, where
 $\sigma$ is the ReLU function, $\bb$ and $\bx$ are both sampled uniformly from $\bS^{d-1}$. $d=10$, $m=n=500$. 
 The left axis shows the errors and the the axis on the right shows the norms.
 The results show that the testing error keeps being small from $t\approx 10^1$ to $t\approx10^6$, and approaches that of the minimal norm solution only after $t>10^9$.}\label{fig:rf_illus}
\end{figure}

%We can  compute the solution of the gradient flow at any given $t$. 
Consider a random feature model with features $\phi(\bx;\bb)$. Let $\{\bx_1,\bx_2,...,\bx_n\}$ be the training data sampled from distribution $\pi$, and $\{\bb_1,\bb_2,...,\bb_m\}$ be the random features sampled from $\mu$. Let $\Phi$ be an $n\times m$ matrix with $\Phi_{ij}=\phi(\bx_i;\bb_j)$. Then, the random feature model seeks to find a prediction function
\begin{equation}
\hat{f}(\bx) = \sum\limits_{k=1}^m a_k\phi(\bx;\bb_k),
\end{equation}
where $\ba=(a_1,b_2,...,a_m)^T$are the parameters. To find $\ba$,  GD is used to optimize the following least squares objective function,
\begin{equation}\label{eq:obj_fcn}
\min_{\ba\in\bR^m} \frac{1}{2n}\left\| \Phi\ba-\by \right\|^2,
\end{equation}
starting from the origin, where $\by$ is a vector containing the value of the target function at $\bx_i,\ i=1,2,...,n$. 
Instead of the discrete gradient descent algorithm, we will consider the gradient flow, which is the limit of  the gradient descent algorithm with 
learning rate tending to $0$. The dynamics of $\ba$ is then given by
\begin{equation}
    \frac{d}{dt}\ba(t) = -\frac{1}{m}\frac{\partial}{\partial\ba}\frac{1}{2n}\left\| \Phi\ba-\by \right\|^2 = -\frac{1}{mn}\Phi^T(\Phi\ba-\by).
\end{equation}

Let $\Phi=U\Sigma V^T$ be the singular value decomposition of $\Phi$, where $\Sigma=diag\{\lambda_1,\lambda_2,...,\lambda_{\min\{n,m\}}\}$ with $\lambda_i$ being the singular values of $\Phi$, in descending order. Then the GD solution of~\eqref{eq:obj_fcn} at time $t\geq0$ is
\begin{equation}\label{eq:solu}
\ba(t) = \sum\limits_{i:\lambda_i>0} \frac{1-e^{-\lambda_i^2t/(mn)}}{\lambda_i}(\bu_i^T\by)\bv_i.
\end{equation}
With this solution, we can conveniently compute the training and testing error at any time $t$, without doing GD. 
The equation above shows clearly the double-sided effect of small eigenvalues. On one hand, at $t = \infty$, they give an $O(1/\lambda)$ contribution
to the minimum-norm solution.
On the other hand,  for $t < O(mn/\lambda^2)$, their effect can be neglected.

We first consider the random feature model with the feature vector given by $\sigma(\bb^T\bx)$, where
 $\sigma$ is the ReLU function, $\bb$ and $\bx$ are both sampled uniformly from $\bS^{d-1}, d=10$.
% We will limit ourselves to the situation when the GD dynamics is initialized at $0$.  It is well-known
% that in this case, GD converges to the minimum-norm solution.
Figure~\ref{fig:rf_illus} shows the results for the case when $\gamma=1$.
Results for more general values of parameters are shown in Figure~\ref{fig:m_neq_n}. 
This figure suggests several things.  The first is that the level of overfitting 
for the minimum solution is drastically reduced away from the transition point. 
%We can see that when $m\neq n$, though the generalization error of the final minimal norm solution may still be larger than the best generalization performance along the trajectory, it is much smaller than that in the case of $m=n$. This alleviation of overfitting is very sensitive to the ratio of $m$ and $n$. Let $\gamma=m/n$, it turns out that 
In  fact the minimal norm solution performs much better even for $\gamma=0.9$ or $1.1$ and the degree of overfitting is much reduced.
In fact, when $\gamma=1/2$ or $2$, the  test error of the minimal norm solution is only a  few times larger than the best value
along the GD path.
If $m$ is chosen to be $O(\sqrt{n})$ or $O(n^2)$, no appreciable overfitting is observed:
% The minimal norm solution gives tbest generalization error along the GD trajectory. 
%In addition,  when overfitting is observed, the ascent of the testing error is much slower than the descent before that. 

\begin{figure}[!h]
\centering
\includegraphics[width=0.48\textwidth]{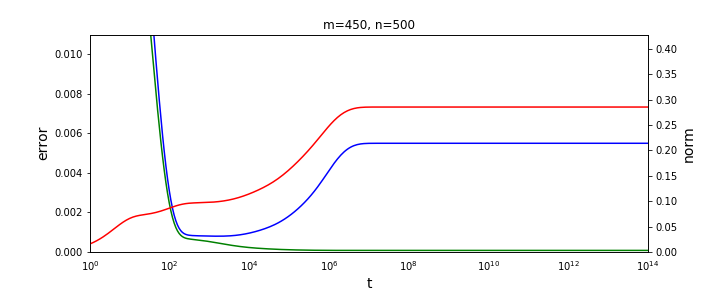}
\includegraphics[width=0.48\textwidth]{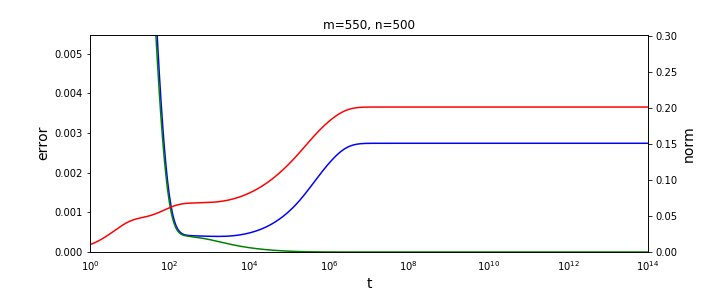}
\includegraphics[width=0.48\textwidth]{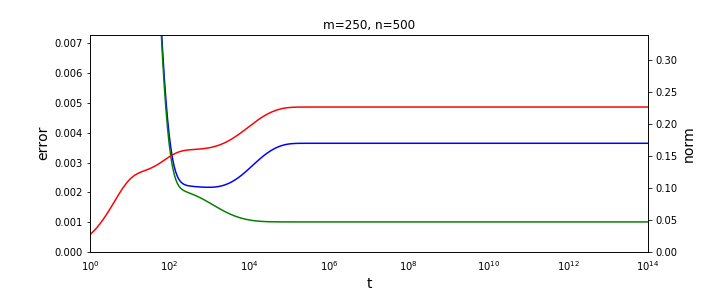}
\includegraphics[width=0.48\textwidth]{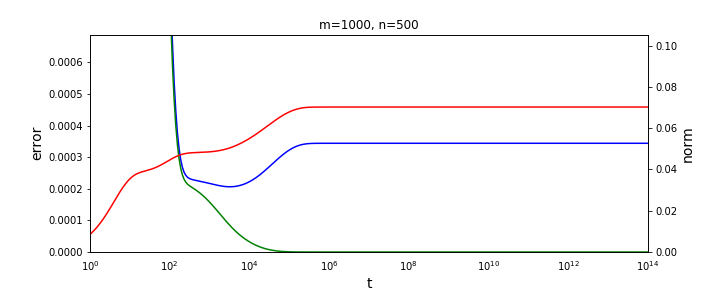}
\includegraphics[width=0.48\textwidth]{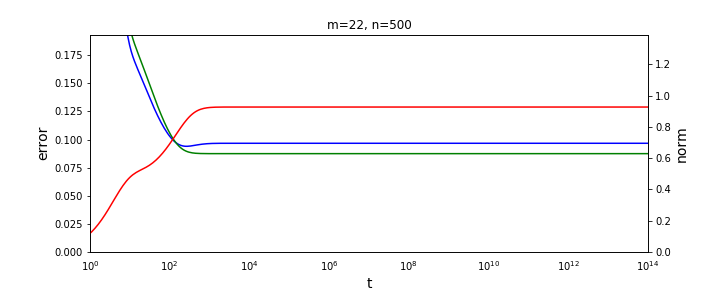}
\includegraphics[width=0.48\textwidth]{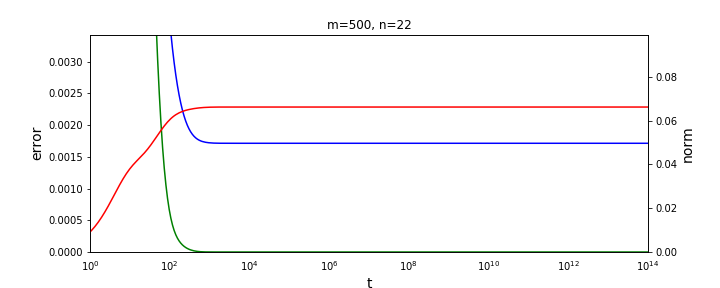}
\caption{\small Time history of the training error, testing error , and solution norm along the GD path for the general case. 
Green lines, blue lines and red lines denote  the training error, testing error, and solution norms, respectively. 
The minimal norm solutions perform better when $m$ is farther away from $n$. No appreciable overfitting is observed for $m=n^2$ or $\sqrt{n}$.}
\label{fig:m_neq_n}
\end{figure}

Another important observation is that the worse the overfitting, the slower it is reached. 
%This is seen from the very long ascending  process of the testing error after the minimal testing error is achieved. 
For example, in Figure~\ref{fig:rf_illus}, it takes nearly $10^{10}$ time units for GD to converge to the minimal norm solution. On the contrary, when $\gamma$ is far from $1$, the ascending process takes shorter time and the minimal norm solutions are reached earlier, as shown in Figure~\ref{fig:m_neq_n}. In Figure~\ref{fig:fix_time}, we put together testing error curves from many different experiments, with fixed $n$ but different $m$. The figure suggests that, though the testing errors of the final stable states are different, larger testing error always requires longer time to reach. Moreover, after the first period where the testing errors decrease, almost all the curves can be upper bounded by the function $y=c\sqrt{t}$,  shown in the figure by the dashed red line, where $c$ is a constant. This upper bound works uniformly for both under and over-parameterized models. It 
suggests that if we can tolerate a constant multiple of the minimal testing error, then we will have plenty of time to stop the GD algorithm
before getting a solution with bad testing error.
%Specifically, if it take time $t_0$ to get testing error $e$, then if we are satisfied with testing error $Ce$ for some constant $C$, then we will have a time interval with length at least $(C^2-1)t_0$ to do stopping. 

\begin{figure}[!h]
\centering
\includegraphics[width=0.8\textwidth]{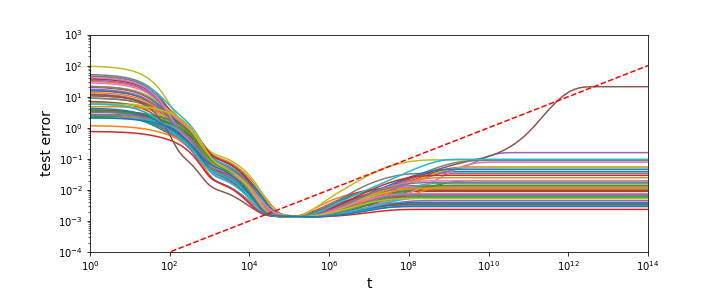}
\caption{\small Testing error curves for fixed $n$ ($n=500$) and different $m$'s. The red dashed line is the function $y=c\sqrt{t}$, where $c$ is a constant. %The dashed line can bound the testing error for nearly all the curves. 
%For those experiments that suffer to severe overfitting, the large test error happens lately.
}
\label{fig:fix_time}
\end{figure}

Figure~\ref{fig:ctrl_fcn2} shows the results for another random feature model, with feature $\mathbf{1}_{\{\bb^T\bx>0\}}$. The phenomenon described above also exists for this model, e.g. the larger the testing error, the longer it takes to reach.
In addition, the testing error curves can be uniformly bounded from above by a function proportional to $\sqrt{t}$. In this figure, we have translated the curves along the $y$-axis to eliminate the influence of the approximation error induced by different values of  $m$. 

\begin{figure}
\centering
\includegraphics[width=0.8\textwidth]{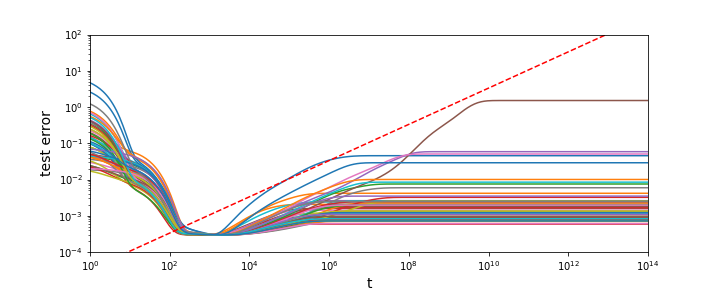}
\caption{\small Testing error curves for fixed $n$ ($n=500$) and different values of  $m$ for the model with features being given by $\mathbf{1}_{\{\bb^T\bx>0\}}$, in which $\bb$ and $\bx$ are still sampled from the uniform distribution on $\bS^{d-1}$. Note that the curves are translated along the $y$-axis so that all the curves have the same minimal value.}
\label{fig:ctrl_fcn2}
\end{figure}

\section{Theoretical analysis}\label{sec:theory}
In this section we explain the {slow deterioration} process of GD for the  random feature model from a theoretical perspective. 
%The intuitive cause of the slow overfitting of GD lies in the eigenvalues of the Gram matrix. In a word, 
Intuitively, since the parameters of the random feature model depend on the inverse of the Gram matrix, 
small eigenvalues of the Gram matrix give rises to minimum-norm solutions with large norms and consequently large testing error.
%caused by small eigenvalues of the Gram matrix. 
At the same time, the  GD dynamics proceeds slowly in the directions that correspond to these small eigenvalues. 
Hence, though the minimal norm solution may generalize poorly, it takes a long time for this behavior to set in along the GD path.

Let $B=[\bb_1,...,\bb_m]$, and with an abuse of notation let $\phi(\bx;B)=(\phi(\bx;\bb_1),\phi(\bx;\bb_2),...,\phi(\bx;\bb_m))^T$, recall the trajectory of the GD~\eqref{eq:solu}, the prediction function is given by
\begin{equation}\label{eq:pred_fcn}
\hat{f}_t(\bx) = \phi(\bx;B)^T\ba(t) = \sum\limits_{i:\lambda_i>0} \frac{1-e^{-\lambda_i^2t/mn}}{\lambda_i}(\bu_i^T\by)(\bv_i^T\phi(\bx;B)).
\end{equation}
Define a function $d(\lambda, t)$ for  $\lambda\geq0$ and $t\geq0$ by
\begin{equation}
d(\lambda, t)=\left\{ \begin{array}{lr}
\frac{1}{\lambda}\left(1-e^{\frac{\lambda^2t}{mn}}\right), & if\ \lambda>0, \\
0 & if\ \lambda=0,
\end{array}\right.
\end{equation}
and let $D(\Sigma, t)\in\bR^{n\times m}$ be a matrix obtained by applying $d(\cdot,t)$ elementwise to $\Sigma$, then~\eqref{eq:pred_fcn} can be written as 
\begin{equation}
\hat{f}_t(\bx) = \by^TUD(\Sigma,t)V^T\phi(\bx;B).
\end{equation}
Therefore, we have
\begin{align}
\| \hat{f}_t(\bx) \|_{l_2} & = \left\| \by^TUD(\Sigma,t)V^T\phi(\bx;B) \right\|_{l_2} \nonumber \\
  & \leq \left\| \by^TU \right\|_{2}\left\| D(\Sigma,t) \right\|_{2}\left\| V^T\phi(\bx;B) \right\|_{l_2} \nonumber \\
  & = \left\| \by \right\|_{2}\left\| D(\Sigma,t) \right\|_{2}\left\| \phi(\bx;B) \right\|_{l_2} \nonumber \\
  & \leq \left\| \by \right\|_{2}\left\| \phi(\bx;B) \right\|_{l_2} \max_{\lambda\geq0} \frac{1}{\lambda}\left(1-e^{\frac{\lambda^2t}{mn}}\right),\label{eq:ft}
\end{align}
where $\|\cdot\|_2$ denotes the $2$-norm of vectors for matrices, and $\|\cdot\|_{l_2}$ is the $l_2$ norm of functions based on the measure $\pi$. For the first part of the right hand side of~\eqref{eq:ft} we know it equals approximately to $\sqrt{n}\|f^*(\bx)\|_{l_2}$, where $f^*(\cdot)$ is the target function. For the second part, it equals approximately to $\sqrt{m\bE_{\bb\sim\mu}\|\phi(\bx;\bb)\|^2_{l_2}}$. For the third part, we have $(1-e^{-\lambda^2t/mn})/\lambda\leq1/\lambda$, and $(1-e^{-\lambda^2t/mn})/\lambda\leq\lambda t/mn$ because of $1-e^{-x}\leq x$. Hence, we have
\begin{equation}\label{eq:lamb_est}
\max_{\lambda\geq0} \frac{1}{\lambda}\left(1-e^{\frac{\lambda^2t}{mn}}\right) \leq \max_{\lambda\geq0} \min\left\{\frac{1}{\lambda}, \frac{\lambda t}{mn}\right\} \leq \sqrt{\frac{t}{mn}},
\end{equation}
and the equality is achieved at $\lambda=\sqrt{mn/t}$. Hence, approximately we have
\begin{equation}
\| \hat{f}_t(\bx) \|_{l_2}  \leq \|f^*(\bx)\|_{l_2}\sqrt{\bE_{\bb\sim\mu}\|\phi(\bx;\bb)\|^2_{l_2}}\sqrt{t}.
\end{equation}
More precisely, we have the following theorem.
\begin{theorem}\label{thm:rough_ft}
Assume there exists a constant $M$ such that $|f^*(\bx)|\leq M$ and $|\phi(\bx;\bb)|\leq M$ for any $\bb$ sampled from $\mu$. Then, for any $\delta>0$, with probability no less than $1-\delta$ over the choice of $\{\bx_i\}$ and $\{\bb_k\}$, we have
\begin{equation}\label{eq:rough_est}
\| \hat{f}_t(\bx) \|_{l_2}  \leq \left(\|f^*(\bx)\|^2_{l_2}+\sqrt{\frac{2M^2\log2/\delta}{n}}\right)^{1/2}\left(\bE_{\bb\sim\mu}\|\phi(\bx;\bb)\|^2_{l_2}+\sqrt{\frac{2M^2\log2/\delta}{m}}\right)^{1/2}\sqrt{t}.
\end{equation}
\end{theorem}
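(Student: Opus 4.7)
The plan is to take the deterministic inequality derived just before the theorem statement and upgrade its two sample-dependent factors $\|\by\|_2$ and $\|\phi(\cdot;B)\|_{l_2}$ to their population counterparts via Hoeffding's inequality applied separately to the $\bx$-randomness and the $\bb$-randomness.

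First, the chain \eqref{eq:ft}--\eqref{eq:lamb_est} already gives the deterministic bound
\begin{equation*}
\|\hat f_t\|_{l_2} \;\le\; \|\by\|_2 \cdot \|\phi(\cdot;B)\|_{l_2} \cdot \sqrt{\tfrac{t}{mn}},
\end{equation*}
using that $U$ has orthonormal columns (so $\|\by^TU\|_2 = \|\by\|_2$), that $V$ has orthonormal columns (so $\|V^T\phi(\bx;B)\|_2 = \|\phi(\bx;B)\|_2$ pointwise in $\bx$, hence the same identity after taking $L^2(\pi)$ norms), and $\|D(\Sigma,t)\|_2 \le \sqrt{t/(mn)}$. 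It therefore suffices to control $\tfrac{1}{n}\|\by\|_2^2$ and $\tfrac{1}{m}\|\phi(\cdot;B)\|_{l_2}^2$ by their population counterparts plus explicit deviations.

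For the first factor, expand $\|\by\|_2^2 = \sum_{i=1}^n f^*(\bx_i)^2$; the summands are i.i.d., bounded in $[0,M^2]$, with mean $\|f^*\|_{l_2}^2$. A one-sided Hoeffding bound then yields
\begin{equation*}
\tfrac{1}{n}\|\by\|_2^2 \;\le\; \|f^*\|_{l_2}^2 + \sqrt{\tfrac{2M^2\log(2/\delta)}{n}}
\end{equation*}
with probability at least $1 - \delta/2$ (matching the form in \eqref{eq:rough_est} up to absorbing an $M^2$ factor into the assumed uniform bound). For the second factor, Fubini gives
\begin{equation*}
\|\phi(\cdot;B)\|_{l_2}^2 \;=\; \sum_{k=1}^m \bE_{\bx\sim\pi}\bigl[\phi(\bx;\bb_k)^2\bigr],
\end{equation*}
and the variables $Y_k := \bE_{\bx\sim\pi}[\phi(\bx;\bb_k)^2]$ are i.i.d.\ over $\bb_k\sim\mu$, lie in $[0,M^2]$, and have common mean $\bE_{\bb\sim\mu}\|\phi(\cdot;\bb)\|_{l_2}^2$. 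A second Hoeffding application yields the analogous bound with $n$ replaced by $m$, again with probability at least $1 - \delta/2$.

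Finally, a union bound ensures that both concentration events hold with probability at least $1-\delta$; substituting into the deterministic inequality, the factors of $\sqrt{n}$ and $\sqrt{m}$ coming from $\|\by\|_2$ and $\|\phi(\cdot;B)\|_{l_2}$ cancel with the $\sqrt{mn}$ in the denominator of $\sqrt{t/(mn)}$, and taking the square root of the product of the two Hoeffding-inflated terms reproduces exactly the right-hand side of \eqref{eq:rough_est}. The main conceptual content of the theorem is the spectral inequality \eqref{eq:lamb_est} already established in the text above; after that, the argument reduces to a standard application of bounded-differences concentration and so presents no serious obstacle, the only mild care being the orthogonality bookkeeping that lets us trade $\|V^T\phi(\bx;B)\|_{l_2}$ for $\|\phi(\cdot;B)\|_{l_2}$ before applying the second Hoeffding bound.
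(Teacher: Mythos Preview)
Your proposal is correct and matches the paper's own argument essentially line for line: the paper states that the theorem ``can be proved by simply applying the Hoeffding inequality to $\|\by\|_2^2$ and $\|\phi(\bx;B)\|_{l_2}^2$ and inserting the results into~\eqref{eq:ft},'' which is precisely the route you take. The only minor slip is the phrase ``$V$ has orthonormal columns'': to get the \emph{equality} $\|V^T\phi(\bx;B)\|_2=\|\phi(\bx;B)\|_2$ you need $V$ to be square orthogonal (full SVD, which is what the paper uses since $D(\Sigma,t)\in\bR^{n\times m}$); with merely orthonormal columns you would only get $\le$, which is still sufficient for the bound.
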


Theorem~\ref{thm:rough_ft} can be proved by simply applying the Hoeffding inequality to $\|\by\|_2^2$ and $\|\phi(\bx;B)\|_{l_2}^2$ and inserting the results into~\eqref{eq:ft}. From Theorem~\ref{thm:rough_ft}, we obtain the following corollary on the testing error along the GD trajectory. 

\begin{corollary}\label{cor:rough_est}
Consider a random feature model with feature $\phi(\bx;\bb)$ and target function $f^*(\bx)$. Assume that $\phi(\bx;\bb)$ and $f^*(\bx)$ satisfy the boundedness condition in Theorem~\ref{thm:rough_ft} with bound $M$. Then, for any $0\leq t<s$ and $\delta>0$, with probability no less than $1-\delta$ over the choice of the training data and random features, we have
{\small
\begin{align}
& \left\|\hat{f}_s(\bx)-f^*(\bx)\right\|_{l_2}  \leq \left\|\hat{f}_t(\bx)-f^*(\bx)\right\|_{l_2} \nonumber \\
  & + \left(\|f^*(\bx)\|^2_{l_2}+\sqrt{\frac{2M^2\log2/\delta}{n}}\right)^{1/2}\left(\bE_{\bb\sim\mu}\|\phi(\bx;\bb)\|^2_{l_2}+\sqrt{\frac{2M^2\log2/\delta}{m}}\right)^{1/2} \nonumber \\
  & \cdot\sqrt{s-t}.
\end{align}}
Hence, if there exists $t_0>0$ and $\epsilon>0$ such that $\left\|\hat{f}_{t_0}(\bx)-f^*(\bx)\right\|_{l_2}\leq \epsilon$, then for any $t>t_0$ we have
{\small
\begin{align}
\left\|\hat{f}_s(\bx)-f^*(\bx)\right\|_{l_2} &\leq \left(\|f^*(\bx)\|^2_{l_2}+\sqrt{\frac{2M^2\log2/\delta}{n}}\right)^{1/2}\left(\bE_{\bb\sim\mu}\|\phi(\bx;\bb)\|^2_{l_2}+\sqrt{\frac{2M^2\log2/\delta}{m}}\right)^{1/2}\sqrt{t-t_0} \nonumber \\
 & +\epsilon.
\end{align}}
\end{corollary}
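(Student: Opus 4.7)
The plan is to reduce the corollary to the machinery of Theorem~\ref{thm:rough_ft} by applying it to the \emph{increment} $\hat{f}_s - \hat{f}_t$ rather than to $\hat{f}_s$ itself. First I would invoke the triangle inequality
$$\left\|\hat{f}_s - f^*\right\|_{l_2} \leq \left\|\hat{f}_t - f^*\right\|_{l_2} + \left\|\hat{f}_s - \hat{f}_t\right\|_{l_2},$$
which turns the whole task into controlling $\|\hat{f}_s - \hat{f}_t\|_{l_2}$ by a $\sqrt{s-t}$ factor times the same two concentration-ready norms that appear on the right-hand side of~\eqref{eq:rough_est}.

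The key algebraic step is the factorization
$$e^{-\lambda^2 t/(mn)} - e^{-\lambda^2 s/(mn)} = e^{-\lambda^2 t/(mn)}\bigl(1 - e^{-\lambda^2(s-t)/(mn)}\bigr),$$
together with the trivial bound $e^{-\lambda^2 t/(mn)} \leq 1$. Substituting this into the SVD expansion~\eqref{eq:pred_fcn} of $\hat{f}_s - \hat{f}_t$ exposes the same structural form used for $\hat{f}_t$ in the proof of Theorem~\ref{thm:rough_ft}, with $t$ replaced by $s-t$ and a nonnegative factor $\leq 1$ inserted coefficient-wise. Propagating this through the chain of norm inequalities in~\eqref{eq:ft} and then applying the bound~\eqref{eq:lamb_est} with $s-t$ in place of $t$ yields
$$\left\|\hat{f}_s - \hat{f}_t\right\|_{l_2} \leq \|\by\|_2 \cdot \|\phi(\bx; B)\|_{l_2} \cdot \sqrt{\frac{s-t}{mn}}.$$

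The final step is to convert the empirical factors $\|\by\|_2/\sqrt{n}$ and $\|\phi(\bx;B)\|_{l_2}/\sqrt{m}$ into their population counterparts $\|f^*\|_{l_2}$ and $\sqrt{\bE_{\bb\sim\mu}\|\phi(\bx;\bb)\|^2_{l_2}}$ via Hoeffding's inequality, exactly as in the proof of Theorem~\ref{thm:rough_ft}, using the boundedness hypothesis $|f^*(\bx)|,|\phi(\bx;\bb)|\leq M$ and a union bound over the two concentration events at level $\delta/2$ each. The second assertion is then immediate: set $t = t_0$ in the first part and use the hypothesis $\|\hat{f}_{t_0} - f^*\|_{l_2} \leq \epsilon$.

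The main obstacle is not conceptual but one of verification: one must confirm that the factorization of the exponentials yields a genuinely \emph{uniform} $\sqrt{s-t}$ envelope across the entire spectrum of nonzero $\lambda_i$, rather than a mere difference of two $\sqrt{\,\cdot\,}$-type envelopes (which would give a trivial and far too loose bound). Once this uniform envelope is in hand, the remaining steps are a direct replay of the estimates already written out in the excerpt.
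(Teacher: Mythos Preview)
Your proposal is correct and follows essentially the same route as the paper: the paper's own argument is just the triangle inequality $\|\hat{f}_s-f^*\|_{l_2}\le\|\hat{f}_t-f^*\|_{l_2}+\|\hat{f}_s-\hat{f}_t\|_{l_2}$ together with the remark that the second term is handled ``using similar techniques as for Theorem~\ref{thm:rough_ft}'', and your factorization $e^{-\lambda^2 t/(mn)}-e^{-\lambda^2 s/(mn)}=e^{-\lambda^2 t/(mn)}\bigl(1-e^{-\lambda^2(s-t)/(mn)}\bigr)\le 1-e^{-\lambda^2(s-t)/(mn)}$ is exactly the detail that makes those ``similar techniques'' go through with $s-t$ in place of $t$. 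The concern you flag at the end is already resolved by that factorization, since it bounds the diagonal entries of the relevant $D$-matrix uniformly by $d(\lambda,s-t)$ and hence by $\sqrt{(s-t)/(mn)}$ via~\eqref{eq:lamb_est}.
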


Corollary~\ref{cor:rough_est} can be proved by using
\begin{equation}
\left\|\hat{f}_s(\bx)-f^*(\bx)\right\|_{l_2} \leq \left\|\hat{f}_t(\bx)-f^*(\bx)\right\|_{l_2} + \left\|\hat{f}_s(\bx)-\hat{f}_t(\bx)\right\|_{l_2},
\end{equation}
and then bounding the second term on the right hand side using similar techniques as  for Theorem~\ref{thm:rough_ft}. 

These results show that the growth of the testing error in the overfitting regime is controlled by a square root function. 
%Hence, the speed of overfitting cannot be very fast.  
%For the cases where the  minimal norm solution generalizes poorly, the overfitting always happens at late times. 
The square root function is the result of adding up  the exponential contributions from all the small eigenvalues.
However, these results are not optimal, in the sense that it treats all the eigenvalues and eigenvectors equally. In reality the leading eigenvalues 
closely approximate the corresponding eigenvalues of the kernel operator, and the associated eigenvectors are also close to that of the
kernel operator  evaluated at the training dataset.
If we further assume that the target function concentrates on the low-frequency modes, i.e. it mainly lies in the subspace spanned by the leading eigenfunctions of the kernel space, then we will have $\sum_{i=1}^p (\bu_i^T\by)^2\approx \|\by\|^2$ for a small $p$, and $\sum_{i=p+1}^n (\bu_i^T\by)^2$ is close to $0$. 
Since the leading eigenvalues are large and  do not contribute to  overfitting, when estimating the overfitting effect we may limit ourselves to the eigenvalues $\lambda_i$ with $i\geq p+1$. This improves the bound for the testing error. 

Specifically, let $K(\cdot,\cdot)$ be the kernel induced by the random features $\phi(\bx;\bb)$ defined by
\begin{equation}\label{eq:def_kernel}
K(\bx,\bx') = \bE_{\bb\sim\mu} \phi(\bx;\bb)\phi(\bx';\bb),
\end{equation}
and let $\cK$ be the corresponding kernel operator:
\begin{equation}
\cK f(\bx) = \int_\Omega K(\bx,\bx')f(\bx')\pi(d\bx').
\end{equation}
Then for bounded $\phi$, $\cK$ is a trace class operator and has eigenvalues $\{\mu_i\}_{i=1}^\infty$ and eigenfunctions $\{\psi_i(\cdot)\}_{i=1}^\infty$. We assume that the eigenvalues are in descending order. {For convenience, we assume $m=n$, and} let $G=\frac{1}{n^2}\Phi\Phi^T$.  By the SVD decomposition of $\Phi$ we have $G=\frac{1}{n^2}U\Sigma\Sigma^TU^T$. Hence, the $\bu_i$'s are the eigenvectors of $G$ and  the $\lambda_i^2/n^2$'s are the eigenvalues. \{For the target function, we assume $f^*$ is the first eigenfunction of $\cK$.
\begin{assumption}\label{assump:fstar}
Assume 
\begin{equation}
f^*(\bx) = \psi_1(\bx),
\end{equation}
% and $r(\bx)$ satisfies $\langle r, \psi_k \rangle_{l_2(\pi)}=0$ for $k=0,1,...,q$ and $\|r(\bx)\|_{l_2}\leq\epsilon$. 
% Furthermore, we assume $\left|\|\by\|/\sqrt{n}-\|f^*\|\right|<\delta$, which is a high-probability event noting that $\|\by\|^2/n$ is a Monte-Carlo approximation of $\|f^*\|^2$.
\end{assumption}
In the following assumption, we assume that the top eigen-pairs of $G$ are close to that of $\cK$ (the eigenvectors of $G$ are close to the eigenfunctions of $\cK$ evaluated at the training data). This is numerically verified in the Appendix.
\begin{assumption}\label{assump:spec}
Assume there exists a constant $C$, such that
\begin{itemize}
{
\item[1] $\left|\frac{\|\by\|^2}{n}-1\right|\leq \frac{C}{\sqrt{n}}$
\item[2] $\left|\frac{\bu_1^T\by}{\sqrt{n}}-1\right|<\frac{C}{\sqrt{n}}$ and $\left\| \frac{\sqrt{n}}{\lambda_1} \bv_1^T\phi(\bx;B) - \psi_1(\bx) \right\|_{l_2}<\frac{C}{\sqrt{n}}$
\item[3] $\left|\langle \frac{\sqrt{n}}{\lambda_i}\bv_i^T\phi(\bx;B), \frac{\sqrt{n}}{\lambda_j}\bv_j^T\phi(\bx;B) \rangle - \delta_{i,j}\right|<\frac{C}{\sqrt{n}}$, for any $2\leq i,j \leq \lfloor \sqrt{n} \rfloor$,
where $\langle\cdot,\cdot\rangle$ is the $l_2$ inner product of two functions. 
}
% \item[1] $\left|\frac{1}{mn}\lambda_k^2-\mu_k\right|<\delta$
% \item[2] $\left\|\bu_k-\frac{\psi_k(X)}{\sqrt{n}}\right\|_2<\delta$, where $\psi_k(X)=(\psi_k(\bx_1),...,\psi_k(\bx_n))^T$, and $\left|\frac{\bu_k^T\by}{\sqrt{n}}-c_k\right|<\delta$
% \item[3] $\left\| \frac{\sqrt{n}}{\lambda_i} \bv_k^T\phi(\bx;B) - \psi_k(\bx) \right\|_{l_2}<\delta$
\end{itemize}
\end{assumption}
The first assumption in~\ref{assump:spec} is just a Monte-Carlo approximation of $\|f^*\|_{l_2}^2$ and is easy to verify. 
The second assumption in~\ref{assump:spec} characterizes the approximation of the largest eigenvalue and the corresponding eigenvector of $G$ to that of $\cK$. 
The third assumes that functions $\frac{\sqrt{n}}{\lambda_i}\bv_i^T\phi(\bx;B)$ are nearly orthogonal for $2\leq i\leq \lfloor \sqrt{n} \rfloor$.
If we multiply $\bu_k^T$ on both side of the equation $\Phi\Phi^T=U\Sigma V^T\Phi$, we get
\begin{equation}
\lambda_k^2\bu_k^T = \bu_k^T\Phi\Phi^T = \bu_k^TU\Sigma V^T\Phi^T=\lambda_k\bv_k^T\Phi.
\end{equation}
{Hence, for $i=1,2,...,n$, we have $\bv_k\phi(\bx_i;B)/\lambda_k=\bu_{k,i}$, which gives 
\begin{equation}
    \frac{1}{n}\sum\limits_{k=1}^n \frac{\sqrt{n}}{\lambda_i}\bv_i^T\phi(\bx_k;B)\cdot\frac{\sqrt{n}}{\lambda_j}\bv_j^T\phi(\bx_k;B)=\delta_{i,j},
\end{equation}
and this makes the second assumption of~\ref{assump:spec} quite reasonable.
}
% In Assumption~\ref{assump:fstar}, $\langle \cdot,\cdot \rangle_{l_2(\pi)}$ denotes the inner product of functions in $l_2(\pi)$. In the following we omit the subscript $l_2(\pi)$. 
%By the assumptions on the residual $r(\bx)$, we know $c_k=\langle f^*, \psi_k \rangle$. Based on Assumptions~\ref{assump:spec} and~\ref{assump:fstar}, we have the following theorem.

{
Then, we have the following results on the error between $\hat{f}_t$ and $f^*$:
\begin{theorem}\label{thm:finer_est}
Let $\hat{\lambda}_i = \frac{\lambda_i}{n}$. If Assumption~\ref{assump:spec} holds, then for sufficiently large $n$ such that $\frac{C}{\sqrt{n}}<1$, we have
\begin{equation}\label{eq:thm}
\|\hat{f}_t-f^*\|\leq 3e^{-2\hat{\lambda}_1^2t} + (5C+1+2\sqrt{C}Md(t))^2n^{-\frac{1}{2}},
\end{equation}
where $M^2=\frac{1}{n}\int \|\sigma(B^T\bx)\|^2\pi(d\bx)$, and
\begin{equation*}
d(t) = \min\left\{ \sqrt{t}, \hat{\lambda}_{\lfloor\sqrt{n}\rfloor+1}t, \hat{\lambda}_n^{-1} \right\}. 
\end{equation*}
\end{theorem}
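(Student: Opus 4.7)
The plan is to start from the spectral decomposition \eqref{eq:pred_fcn}, introduce the rescaled empirical eigenfunctions $g_i(\bx) = \frac{\sqrt{n}}{\lambda_i} \bv_i^T \phi(\bx;B)$ and coefficients $c_i = \bu_i^T \by/\sqrt{n}$, and rewrite
\begin{equation*}
\hat{f}_t - f^* = \sum_{i}\left(1-e^{-\hat\lambda_i^2 t}\right) c_i g_i - \psi_1.
\end{equation*}
I would split this sum into three bands according to the singular index: (i) the dominant mode $i=1$, (ii) the middle band $2 \leq i \leq \lfloor\sqrt{n}\rfloor$, and (iii) the tail $i > \lfloor\sqrt{n}\rfloor$, then bound the $l_2$ norm of each by the triangle inequality. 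Each band exploits a different item of Assumption~\ref{assump:spec} in combination with the filter estimate already used in \eqref{eq:lamb_est}.

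For band (i), I would algebraically rearrange
\begin{equation*}
(1-e^{-\hat\lambda_1^2 t}) c_1 g_1 - \psi_1 = (c_1 g_1 - \psi_1) - e^{-\hat\lambda_1^2 t} c_1 g_1,
\end{equation*}
and invoke the second item of Assumption~\ref{assump:spec} ($|c_1-1|,\,\|g_1-\psi_1\|_{l_2} \leq C/\sqrt{n}$) to conclude $\|c_1 g_1 - \psi_1\|_{l_2} = O(C/\sqrt{n})$ and $\|c_1 g_1\|_{l_2} = 1 + O(C/\sqrt{n})$; this supplies the $e^{-\hat\lambda_1^2 t}$ factor together with an $O(1/\sqrt{n})$ remainder. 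For band (ii), I would expand
\begin{equation*}
\left\|\sum_{i=2}^{\lfloor\sqrt{n}\rfloor}(1-e^{-\hat\lambda_i^2 t}) c_i g_i\right\|_{l_2}^2
\end{equation*}
and replace $\langle g_i, g_j\rangle$ by $\delta_{ij} + O(C/\sqrt{n})$ via the third item of Assumption~\ref{assump:spec}. The diagonal part is controlled by $\sum_{i\geq 2} c_i^2 = \|\by\|^2/n - c_1^2$, which the first two items of Assumption~\ref{assump:spec} force to be $O(C/\sqrt{n})$; the off-diagonal part is at most $O(C/\sqrt{n})\,(\sum_{i=2}^{\lfloor\sqrt{n}\rfloor}|c_i|)^2$, and a Cauchy--Schwarz step $\sum_{i=2}^{\lfloor\sqrt{n}\rfloor}|c_i| \leq n^{1/4}\sqrt{\sum_i c_i^2}$ made possible by the $\lfloor\sqrt{n}\rfloor$ truncation keeps this of size $O(1/\sqrt{n})$.

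For band (iii), the key step is the tail version of the filter bound from \eqref{eq:lamb_est}:
\begin{equation*}
\frac{1-e^{-\hat\lambda_i^2 t}}{\lambda_i} \leq \frac{d(t)}{n} \qquad (i > \lfloor\sqrt{n}\rfloor),
\end{equation*}
which follows because $\sqrt{t}/n$, $\hat\lambda_{\lfloor\sqrt{n}\rfloor+1} t/n$, and $(n\hat\lambda_n)^{-1}$ each dominate one of the three candidates in the minimum. Triangle inequality followed by Cauchy--Schwarz then bounds the $l_2$ norm of the tail by $\frac{d(t)}{n}\sqrt{\sum_{i\geq 2}(\bu_i^T\by)^2}\,\sqrt{\sum_{i}\|\bv_i^T\phi(\cdot;B)\|_{l_2}^2}$. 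The first radical is $O(n^{1/4})$ by the same bookkeeping as in band (ii); the second equals $\sqrt{nM^2}$, since $\{\bv_i\}$ is an orthonormal basis of $\RR^n$ and $\int\|\phi(\bx;B)\|^2\pi(d\bx)=nM^2$ by definition of $M$. Multiplying out yields a tail bound of order $\sqrt{C}\,M\,d(t)\,n^{-1/4}$.

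Adding the three band bounds and squaring via $(a+b+c)^2 \leq 3(a^2+b^2+c^2)$ collects the dominant-mode contribution into the $3e^{-2\hat\lambda_1^2 t}$ term and folds the three $n^{-1/4}$-sized pieces into $(5C+1+2\sqrt{C}Md(t))^2 n^{-1/2}$ after tracking the explicit constants. The main technical obstacle will be the cross-term accounting in band (ii): because the near-orthogonality in Assumption~\ref{assump:spec} is only accurate to $O(C/\sqrt{n})$, without the $\lfloor\sqrt{n}\rfloor$ truncation the off-diagonal contribution would dominate; matching the final numerical constants requires carefully tracing how the errors in $c_1$ and $g_1$ propagate through the identity $\sum_{i\geq 2}c_i^2 = \|\by\|^2/n - c_1^2$ and how they combine with the $d(t)$-dependent piece from band (iii).
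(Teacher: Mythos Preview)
Your proposal is correct and follows essentially the same route as the paper: the same three-band split $i=1$, $2\le i\le\lfloor\sqrt{n}\rfloor$, $i>\lfloor\sqrt{n}\rfloor$ (the paper calls these $I,J,K$), the same use of Assumption~\ref{assump:spec} in each band, the same Cauchy--Schwarz step exploiting the $\lfloor\sqrt{n}\rfloor$ truncation for the off-diagonal part of band~(ii), and the same filter bound $\frac{1-e^{-\hat\lambda_i^2 t}}{\lambda_i}\le d(t)/n$ for band~(iii). The only cosmetic differences are that the paper rearranges band~(i) as $e^{-\hat\lambda_1^2 t}\psi_1 + (1-e^{-\hat\lambda_1^2 t})(c_1 g_1-\psi_1)$ rather than your $(c_1 g_1-\psi_1)-e^{-\hat\lambda_1^2 t}c_1 g_1$, and handles band~(iii) in matrix form via $\tilde U,\tilde V$ rather than your scalar Cauchy--Schwarz; the paper also does not spell out the final squaring step that produces the $3e^{-2\hat\lambda_1^2 t}$ and $(\cdots)^2 n^{-1/2}$ form, which you correctly supply.
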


\begin{proof}
First, by~\eqref{eq:pred_fcn} we have 
\begin{align}
\|\hat{f}_t-f^*\| &= \left\| \sum\limits_{i=1}^n \frac{1-e^{-\hat{\lambda}_i^2t}}{\lambda_i} (\bu_i^T\by)(\bv_i^T\phi(\bx;B)) - \psi_1(\bx)) \right\| \nonumber \\
  &\leq \left\| \frac{1-e^{-\hat{\lambda}_1^2t}}{\lambda_1} (\bu_1^T\by)(\bv_1^T\phi(\bx;B)) - \psi_1(\bx) \right\| + \left\| \sum\limits_{i=2}^{\lfloor\sqrt{n}\rfloor} \frac{1-e^{-\hat\lambda_i^2t}}{\lambda_i} (\bu_i^T\by)(\bv_i^T\phi(\bx;B)) \right\| \nonumber \\
  & \quad + \left\| \sum\limits_{i=\lfloor\sqrt{n}\rfloor+1}^n \frac{1-e^{-\hat\lambda_i^2t}}{\lambda_i} (\bu_i^T\by)(\bv_i^T\phi(\bx;B)) \right\| \nonumber \\
  &=: I+J+K. \label{eq:err_decomp}
\end{align}
Next, we estimate $I$, $J$ and $K$ separately.

For $I$, by Assumption~\ref{assump:spec}, we get
\begin{align}
I & \leq \left\| e^{-\hat\lambda_1^2t}\psi_1(\bx) \right\| + \left\| (1-e^{-\hat\lambda_1^2t})\left(\frac{1}{\lambda_1}(\bu_1^T\by)(\bv_1^T\phi(\bx;B))-\psi_1(\bx)\right) \right\| \nonumber\\
  & \leq e^{-\hat\lambda_1^2t} + \left\| \left(\frac{1}{\lambda_1}(\bu_1^T\by)(\bv_1^T\phi(\bx;B))-\psi_1(\bx)\right) \right\| \nonumber \\
  & \leq e^{-\hat\lambda_1^2t} + \left\|\left(\frac{\bu_1^T\by}{\sqrt{n}}-1\right)\frac{\sqrt{n}\bv_1^T\phi(\bx;B)}{\lambda_1}\right\| + \left\|\frac{\sqrt{n}\bv_1^T\phi(\bx;B)}{\lambda_1}-\psi_1(\bx)\right\| \nonumber \\
  & \leq e^{-\hat\lambda_1^2t} + \frac{C}{\sqrt{n}}(1+\frac{C}{\sqrt{n}}) + \frac{C}{\sqrt{n}} \nonumber \\
  & \leq e^{-\hat\lambda_1^2t} + 3\frac{C}{\sqrt{n}}.
\end{align}

For $J$, we consider $J^2$ and have
\begin{align}
J^2 & = \sum\limits_{i,j=2}^{\lfloor\sqrt{n}\rfloor} (1-e^{-\hat\lambda_i^2t}) (1-e^{-\hat\lambda_j^2t}) \frac{\bu_i^T\by}{\sqrt{n}}\frac{\bu_j^T\by}{\sqrt{n}} \left\langle \frac{\sqrt{n}}{\lambda_i}\bv_i^T\phi(\bx;B), \frac{\sqrt{n}}{\lambda_j}\bv_j^T\phi(\bx;B)  \right\rangle \nonumber \\
  & \leq \sum\limits_{i,j=2}^{\lfloor\sqrt{n}\rfloor}\left|\frac{\bu_i^T\by}{\sqrt{n}}\frac{\bu_j^T\by}{\sqrt{n}}\right|\left|\left\langle \frac{\sqrt{n}}{\lambda_i}\bv_i^T\phi(\bx;B), \frac{\sqrt{n}}{\lambda_j}\bv_j^T\phi(\bx;B) \right\rangle\right|.
\end{align}
By Assumption~\ref{assump:spec} we have
\begin{align}
J^2 &\leq \frac{C}{\sqrt{n}}\sum\limits_{i,j=2}^{\lfloor\sqrt{n}\rfloor}\left|\frac{\bu_i^T\by}{\sqrt{n}}\frac{\bu_j^T\by}{\sqrt{n}}\right| + \sum\limits_{i=2}^{\lfloor\sqrt{n}\rfloor} \frac{(\bu_i^T\by)^2}{n} \nonumber\\
  & \leq \frac{C+1}{n} \sum\limits_{i=2}^{\lfloor\sqrt{n}\rfloor} (\bu_i^T\by)^2 \nonumber\\
  & \leq (C+1) \left( \frac{\|\by\|^2}{n}-\frac{(\bu_1^T\by)^2}{n} \right) \nonumber \\
  & \leq (C+1) \left(\left|\frac{\|\by\|^2}{n}-1\right|+\left|\frac{(\bu_1^T\by)^2}{n}-1\right|\right) \nonumber \\
  & \leq (C+1)\frac{4C}{\sqrt{n}}. 
\end{align}
Hence, for $J$ we obtain $J\leq (2C+1)n^{-1/4}$.

Finally, we estimate $K$. Let $\title{U}=[\bu_{\lfloor\sqrt{n}\rfloor+1}, \cdots, \bu_n]$ and $\title{V}=[\bv_{\lfloor\sqrt{n}\rfloor+1}, \cdots, \bv_n]$. Then, we can bound $K$ as 
\begin{equation}
    K \leq \max_{\lfloor\sqrt{n}\rfloor+1\leq i\leq n}\frac{1-e^{-\hat\lambda_i^2t}}{\lambda_i} \left\|\frac{\tilde{U}^T\by}{\sqrt{n}}\right\|\cdot\left\|\sqrt{n}\tilde{V}^T\phi(\bx;B)\right\|.
\end{equation}
For $\left\|\frac{\tilde{U}^T\by}{\sqrt{n}}\right\|$, similar to the estimate of $J$ we have
\begin{equation}
    \left\|\frac{\tilde{U}^T\by}{\sqrt{n}}\right\|\leq \sqrt{\frac{\|\by\|^2}{n}-\frac{(\bu_1^T\by)}{n}} \leq 2\sqrt{C}n^{-1/4}.
\end{equation}
For $\left\|\sqrt{n}\tilde{V}^T\phi(\bx;B)\right\|$,  we have
\begin{equation}
\left\|\sqrt{n}\tilde{V}^T\phi(\bx;B)\right\|\leq \sqrt{n}\left\|V^T\phi(\bx;B)\right\|=\sqrt{n}\|\phi(\bx;B)\|\leq nM.
\end{equation}
For $\max_{\lfloor\sqrt{n}\rfloor+1\leq i\leq n}$, by~\eqref{eq:lamb_est} we have
\begin{equation}\label{eq: K1}
    \max_{\lfloor\sqrt{n}\rfloor+1\leq i\leq n}\frac{1-e^{-\hat\lambda_i^2t}}{\lambda_i} \leq \frac{\sqrt{t}}{n}.
\end{equation}
On the other hand, it is easy to see 
\begin{equation}\label{eq: K2}
    \max_{\lfloor\sqrt{n}\rfloor+1\leq i\leq n}\frac{1-e^{-\hat\lambda_i^2t}}{\lambda_i} \leq \max_{\lfloor\sqrt{n}\rfloor+1\leq i\leq n}\frac{1}{\lambda_i}\leq \frac{\hat\lambda_n^{-1}}{n},
\end{equation}
and 
\begin{equation}\label{eq: K3}
    \max_{\lfloor\sqrt{n}\rfloor+1\leq i\leq n}\frac{1-e^{-\hat\lambda_i^2t}}{\lambda_i} \leq \max_{\lfloor\sqrt{n}\rfloor+1\leq i\leq n}\frac{\hat\lambda_i t}{n}.
\end{equation}
Combining~\eqref{eq: K1}-\eqref{eq: K3}, we have
\begin{equation}
    \max_{\lfloor\sqrt{n}\rfloor+1\leq i\leq n}\frac{1-e^{-\hat\lambda_i^2t}}{\lambda_i} \leq \frac{1}{n}d(t),
\end{equation}
and for $K$ we have estimate
\begin{equation}
    K\leq 2\sqrt{C}n^{-1/4}Md(t).
\end{equation}

Combining the estimates for $I$, $J$ and $K$ completes the proof. 
\end{proof}

From the theorem above, we see that the testing error in time can be divided into three regimes. The first regime is an exponential decay regime
 governed by the first term on the right hand side of~\eqref{eq:thm}. The second regime is a period in which the testing error keeps  being small, after the decaying term has been sufficiently reduced and before the effect represented by the term with $d(t)$ really shows up. In the third regime, the last term on the right hand side of~\eqref{eq:thm} begins to manifest  and finally becomes very large (because $1/\lambda_n$ is very small). The existence of the second regime is caused by the gap between the leading eigenvalue $\lambda_1$ and the eigenvalues appearing in the third term, and this period is long when this gap is large, which occurs when $n$ is large. The following corollary roughly characterizes the length of this period under a specific assumption 
 about the decay rate of $\hat{\lambda}_t$.
\begin{corollary}
%If $\hat{\lambda}_k$ decays in rate no less than $1/\sqrt{k}$, i.e. 
If there exist constant $C'$ such that $|\hat{\lambda}_k|\leq\frac{C'}{\sqrt{k}}$ holds for any $1\leq k\leq\lfloor\sqrt{n}\rfloor+1$. Then, there exist constants $c_1$ and $c_2$ (which may depend on $\hat{\lambda}_1$) such that  
\begin{equation*}
    \|\hat{f}_t-f^*\|^2 \leq \frac{c_1}{\sqrt{n}},
\end{equation*}
when $c_2\log n\leq t\leq c_2n^{\frac{1}{4}}$. Hence, the length of the second regime described above is at least in the order of $n^{1/4}$.
\end{corollary}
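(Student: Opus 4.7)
The plan is to plug the hypothesis $\hat\lambda_k\le C'/\sqrt k$ directly into the bound~\eqref{eq:thm} of Theorem~\ref{thm:finer_est} and choose $c_2$ so as to simultaneously kill the exponential term and keep the function $d(t)$ bounded by an $n$-independent constant on the window $[c_2\log n,\,c_2 n^{1/4}]$. The whole proof then reduces to a balancing of the three pieces already packaged in that theorem.

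First, for the exponential term $3e^{-2\hat\lambda_1^2 t}$, I would pick $c_2$ large enough that $2\hat\lambda_1^2 c_2\ge 1/4$. Then for every $t\ge c_2\log n$,
\[
3e^{-2\hat\lambda_1^2 t}\le 3e^{-\frac14\log n}=3\,n^{-1/4}.
\]
This step uses that $\hat\lambda_1$ is a positive $O(1)$ quantity, which is precisely why the corollary states that $c_1,c_2$ may depend on $\hat\lambda_1$.

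Second, for the term $(5C+1+2\sqrt C\,M\,d(t))^2\,n^{-1/2}$, I would exploit the middle branch of the definition of $d(t)$. Applying the decay hypothesis at index $k=\lfloor\sqrt n\rfloor+1$ gives $\hat\lambda_{\lfloor\sqrt n\rfloor+1}\le 2C'\,n^{-1/4}$ for $n$ large, so for every $t\le c_2 n^{1/4}$,
\[
d(t)\le \hat\lambda_{\lfloor\sqrt n\rfloor+1}\,t\le 2C'c_2.
\]
Thus $d(t)$ is uniformly bounded by a constant on the window, the prefactor collapses to a constant $\kappa=\kappa(C,C',M,c_2)$, and the second term is at most $\kappa\,n^{-1/2}$.

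Combining the two estimates yields $\|\hat f_t-f^*\|\le 3n^{-1/4}+\kappa\,n^{-1/2}\le \kappa'\,n^{-1/4}$ on the window, and squaring gives $\|\hat f_t-f^*\|^2\le c_1\,n^{-1/2}$ with $c_1=(\kappa')^2$. The only mild obstacle is the joint choice of $c_2$: the lower bound from the exponential decay scales as $1/\hat\lambda_1^2$, and the window $[c_2\log n,\,c_2 n^{1/4}]$ is nonempty only once $\log n\le n^{1/4}$, i.e. for $n$ sufficiently large. Since both constants are allowed to depend on $\hat\lambda_1$ and the statement is asymptotic in $n$, neither constraint is a real hurdle; no new probabilistic or spectral input is required beyond the fact that the \emph{middle} branch of $d(t)$ is exactly what produces the $n^{1/4}$ length of the second regime.
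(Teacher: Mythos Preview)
Your proposal is correct and follows essentially the same argument as the paper: choose $c_2$ proportional to $1/\hat\lambda_1^2$ so that the exponential term in~\eqref{eq:thm} is at most $O(n^{-1/4})$ for $t\ge c_2\log n$, then use the middle branch $d(t)\le \hat\lambda_{\lfloor\sqrt n\rfloor+1}\,t$ together with $\hat\lambda_{\lfloor\sqrt n\rfloor+1}\le C'n^{-1/4}$ to keep $d(t)$ bounded by a constant for $t\le c_2 n^{1/4}$. The paper in fact fixes $c_2=1/(4\hat\lambda_1^2)$ explicitly, but your inequality $2\hat\lambda_1^2 c_2\ge 1/4$ amounts to the same choice up to a harmless factor of two.
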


\begin{proof}
Let $c_2 = \frac{1}{4\lambda_1^2}$, then, for $t$ that satisfies $c_2\log n\leq t\leq c_2n^{\frac{1}{4}}$, we have
\begin{align}
\|\hat{f}_t-f^*\| &\leq n^{-\frac{1}{4}} + (5C+1+2\sqrt{C}M\hat\lambda_{\lfloor\sqrt{n}\rfloor+1}c_2n^{\frac{1}{4}})n^{-\frac{1}{4}}.
\end{align}
By the condition that $|\hat\lambda_k|\leq\frac{C'}{\sqrt{k}}$ we have
\begin{equation}
\|\hat{f}_t-f^*\| \leq n^{-\frac{1}{4}}(1+5C+1+2\sqrt{C}MC'c_2).
\end{equation}
The proof is completed by taking $c_1=2+5C+2\sqrt{C}C'c_2M$.
\end{proof}
}

\begin{remark}
Theorem~\ref{thm:finer_est} shows that the estimates of the testing error can be refined if the target function is band-limited, and the leading 
eigen-pairs of the Gram matrix align well with that of the kernel operator. In the appendix we numerically demonstrate that the assumptions for this theorem are likely to hold in practice. 
\end{remark}
%\begin{remark}
%The estimates in Theorem~\ref{thm:finer_est} consist of three parts. The first part is large at the beginning but converges to $0$ exponentially fast.
%The second part characterizes the overfitting, and increases as $\sqrt{t}$. The third term is a small error term  caused mainly by the error in the approximation of the spectrum of the Gram matrix to that of the kernel operator.  
%\end{remark}
\begin{remark}
%The upper bound of the testing error, which increases as $\sqrt{t}$, is uniform for all the models that satisfy the assumptions. For specific cases, the testing error will not increase forever. 
The testing error of the final solution (the minimal norm solution) is largely controlled by the smallest eigenvalue of the Gram matrix. In the appendix we demonstrate by numerical experiments that the minimal eigenvalue of the Gram matrix is very small only when $m\approx n$.
\end{remark}

\section{Discussions}\label{sec:discuss}

In this paper, we studied the double-sided effect of the small eigenvalues of the Gram matrix in a random feature model.  
An obvious question is how small
 the smallest eigenvalues are,  In the appendix, we provide numerical evidence that the smallest eigenvalue approximately obeys the predictions
 of the Marchenko-Pastur distribution  in the random matrix theory when $\gamma$ is close to 1.
 
The second obvious question is why should the smallest eigenvalues be so small when $\gamma=1$, i.e. why should the
Gram matrix be almost singular when $\gamma=1$.  While we do not have a simple answer to this question, we can address the related question:  Why should the Gram matrix be non-singular away from $\gamma=1$? 
The reason is that for large values of $\gamma$, the spectrum of the Gram matrix  converges to the top spectrum of the corresponding kernel operator, which is obviously non-singular. This can be seen from  the last figure in Figure \ref{fig: spectrum}.   The same is true for very small values of $\gamma$ if we consider the ReLU feature $\sigma(\bb^T\bx)$. In this case, the non-zero eigenvalues of $G=\Phi\Phi^T/mn$ equal to the eigenvalues of $\Phi^T\Phi/mn$, and the latter can be viewed as a Gram matrix with $\bx$ being the features and $\bb$ being the data, and $\gamma$ much larger than $1$, due to the symmetry of $\bb$ and $\bx$.

% {\bf should we put in more details?}

The third question is how relevant this phenomenon is for other models, such as the two-layer neural network model? We guess similar ``resonance'' phenomenon also exists, but extensive experimental and theoretical study is left for future work.

% {\bf here we look at the spectrum of the linearized operator for the two-layer neural network model}

% % Acknowledgments---Will not appear in anonymized version
% \acks{We thank a bunch of people.}
% \bibliographystyle{plain}
\bibliography{slow_overf}

\begin{thebibliography}{18}
\providecommand{\natexlab}[1]{#1}
\providecommand{\url}[1]{\texttt{#1}}
\expandafter\ifx\csname urlstyle\endcsname\relax
  \providecommand{\doi}[1]{doi: #1}\else
  \providecommand{\doi}{doi: \begingroup \urlstyle{rm}\Url}\fi

\bibitem[Advani and Saxe(2017)]{advani2017high}
Madhu~S Advani and Andrew~M Saxe.
\newblock High-dimensional dynamics of generalization error in neural networks.
\newblock \emph{arXiv preprint arXiv:1710.03667}, 2017.

\bibitem[Belkin et~al.(2019{\natexlab{a}})Belkin, Hsu, Ma, and
  Mandal]{belkin2018reconciling}
Mikhail Belkin, Daniel Hsu, Siyuan Ma, and Soumik Mandal.
\newblock Reconciling modern machine-learning practice and the classical
  bias--variance trade-off.
\newblock \emph{Proceedings of the National Academy of Sciences}, 116\penalty0
  (32):\penalty0 15849--15854, 2019{\natexlab{a}}.

\bibitem[Belkin et~al.(2019{\natexlab{b}})Belkin, Hsu, and Xu]{belkin2019two}
Mikhail Belkin, Daniel Hsu, and Ji~Xu.
\newblock Two models of double descent for weak features.
\newblock \emph{arXiv preprint arXiv:1903.07571}, 2019{\natexlab{b}}.

\bibitem[Carratino et~al.(2018)Carratino, Rudi, and
  Rosasco]{carratino2018learning}
Luigi Carratino, Alessandro Rudi, and Lorenzo Rosasco.
\newblock Learning with {SGD} and random features.
\newblock In \emph{Advances in Neural Information Processing Systems}, pages
  10192--10203, 2018.

\bibitem[Deng et~al.(2019)Deng, Kammoun, and Thrampoulidis]{deng2019model}
Zeyu Deng, Abla Kammoun, and Christos Thrampoulidis.
\newblock A model of double descent for high-dimensional binary linear
  classification.
\newblock \emph{arXiv preprint arXiv:1911.05822}, 2019.

\bibitem[E et~al.(2020)E, Ma, and Wu]{ma2019comparative}
Weinan E, Chao Ma, and Lei Wu.
\newblock A comparative analysis of optimization and generalization properties
  of two-layer neural network and random feature models under gradient descent
  dynamics.
\newblock \emph{arXiv preprint arXiv:1904.04326, Science China Mathematics},
  pages 1--24, 2020.

\bibitem[Erdlyi et~al.(1954)Erdlyi, Magnus, Oberhettinger, and
  Tricomi]{erdlyi1954tables}
A~Erdlyi, W~Magnus, F~Oberhettinger, and FG~Tricomi.
\newblock Tables of integral transforms. vol. ii. based, in part, on notes left
  by harry bateman, 1954.

\bibitem[Frye and Efthimiou(2012)]{frye2012spherical}
Christopher Frye and Costas~J Efthimiou.
\newblock Spherical harmonics in p dimensions.
\newblock \emph{arXiv preprint arXiv:1205.3548}, 2012.

\bibitem[Kini and Thrampoulidis(2020)]{kini2020analytic}
Ganesh Kini and Christos Thrampoulidis.
\newblock Analytic study of double descent in binary classification: The impact
  of loss.
\newblock \emph{arXiv preprint arXiv:2001.11572}, 2020.

\bibitem[Marchenko and Pastur(1967)]{marchenko1967distribution}
Vladimir~Alexandrovich Marchenko and Leonid~Andreevich Pastur.
\newblock Distribution of eigenvalues for some sets of random matrices.
\newblock \emph{Matematicheskii Sbornik}, 114\penalty0 (4):\penalty0 507--536,
  1967.

\bibitem[Mei and Montanari(2019)]{mei2019generalization}
Song Mei and Andrea Montanari.
\newblock The generalization error of random features regression: Precise
  asymptotics and double descent curve.
\newblock \emph{arXiv preprint arXiv:1908.05355}, 2019.

\bibitem[Spigler et~al.(2019)Spigler, Geiger, d'Ascoli, Sagun, Biroli, and
  Wyart]{spigler2018jamming}
Stefano Spigler, Mario Geiger, St{\'e}phane d'Ascoli, Levent Sagun, Giulio
  Biroli, and Matthieu Wyart.
\newblock A jamming transition from under-to over-parametrization affects
  generalization in deep learning.
\newblock \emph{Journal of Physics A: Mathematical and Theoretical},
  52\penalty0 (47):\penalty0 474001, 2019.

\bibitem[Suggala et~al.(2018)Suggala, Prasad, and
  Ravikumar]{suggala2018connecting}
Arun Suggala, Adarsh Prasad, and Pradeep~K Ravikumar.
\newblock Connecting optimization and regularization paths.
\newblock In \emph{Advances in Neural Information Processing Systems}, pages
  10608--10619, 2018.

\bibitem[Xie et~al.(2017)Xie, Liang, and Song]{xie2016diverse}
Bo~Xie, Yingyu Liang, and Le~Song.
\newblock Diverse neural network learns true target functions.
\newblock In \emph{Artificial Intelligence and Statistics}, pages 1216--1224,
  2017.

\bibitem[Xu(2018)]{xu2018understanding}
Zhi-Qin~John Xu.
\newblock Understanding training and generalization in deep learning by fourier
  analysis.
\newblock \emph{arXiv preprint arXiv:1808.04295}, 2018.

\bibitem[Xu et~al.(2019)Xu, Zhang, Luo, Xiao, and Ma]{xu2019frequency}
Zhi-Qin~John Xu, Yaoyu Zhang, Tao Luo, Yanyang Xiao, and Zheng Ma.
\newblock Frequency principle: Fourier analysis sheds light on deep neural
  networks.
\newblock \emph{arXiv preprint arXiv:1901.06523}, 2019.

\bibitem[Yao et~al.(2007)Yao, Rosasco, and Caponnetto]{yao2007early}
Yuan Yao, Lorenzo Rosasco, and Andrea Caponnetto.
\newblock On early stopping in gradient descent learning.
\newblock \emph{Constructive Approximation}, 26\penalty0 (2):\penalty0
  289--315, 2007.

\bibitem[Zhang et~al.(2019)Zhang, Xu, Luo, and Ma]{zhang2019explicitizing}
Yaoyu Zhang, Zhi-Qin~John Xu, Tao Luo, and Zheng Ma.
\newblock Explicitizing an implicit bias of the frequency principle in
  two-layer neural networks.
\newblock \emph{arXiv preprint arXiv:1905.10264}, 2019.

\end{thebibliography}

\appendix 

\section{Spectrum of the Gram matrix}
Let $\pi$ be the uniform distribution on the unit sphere $\bS^{d-1}$, where $d\geq3$ is the dimension. The random feature model we consider has features $\phi(\bx;\bb)=\sigma(\bb^T\bx)$, where $\bx\sim\pi$, $\bb\sim\pi$, and $\sigma(\cdot)$ is the ReLU function. Then, the kernel defined by~\eqref{eq:def_kernel} have a closed form
\begin{equation}\label{eq:kernel_form}
k(\bx,\bx') = \sqrt{1-(\bx^T\bx')^2}+\bx^T\bx'(\pi-\arccos(\bx^T\bx')).
\end{equation}
We study the following three quantities. 
\begin{itemize}
    \item Gram matrix $G=(G_{i,j})$ with $G_{i,j}=\frac{1}{nm}\sum_{s=0}^m \sigma(\bb_s^T\bx_i)\sigma(\bb_s^T\bx_j)$.
    \item Kernel matrix $K=(K_{i,j})$ with $K_{i,j}=\frac{1}{n}k(\bx_i,\bx_j)$. 
    \item The kernel operator $\mathcal{K}$ defined by 
    \[
        \mathcal{K} f(\bx) = \int_{\SS^{d-1}} k(\bx,\bx') f(\bx') d\pi(\bx').
    \]
\end{itemize}

In Figure \ref{fig: spectrum}, the spectra of Gram matrices for various value of $m$ are displayed. As a comparison, the spectra of the corresponding kernel operator and kernel matrix are also plotted. Let $\gamma=m/n$.  We have the following observations.
\begin{figure}[!h]
\centering
\subfigure[$\gamma=0.5$]{
    \includegraphics[width=0.32\textwidth]{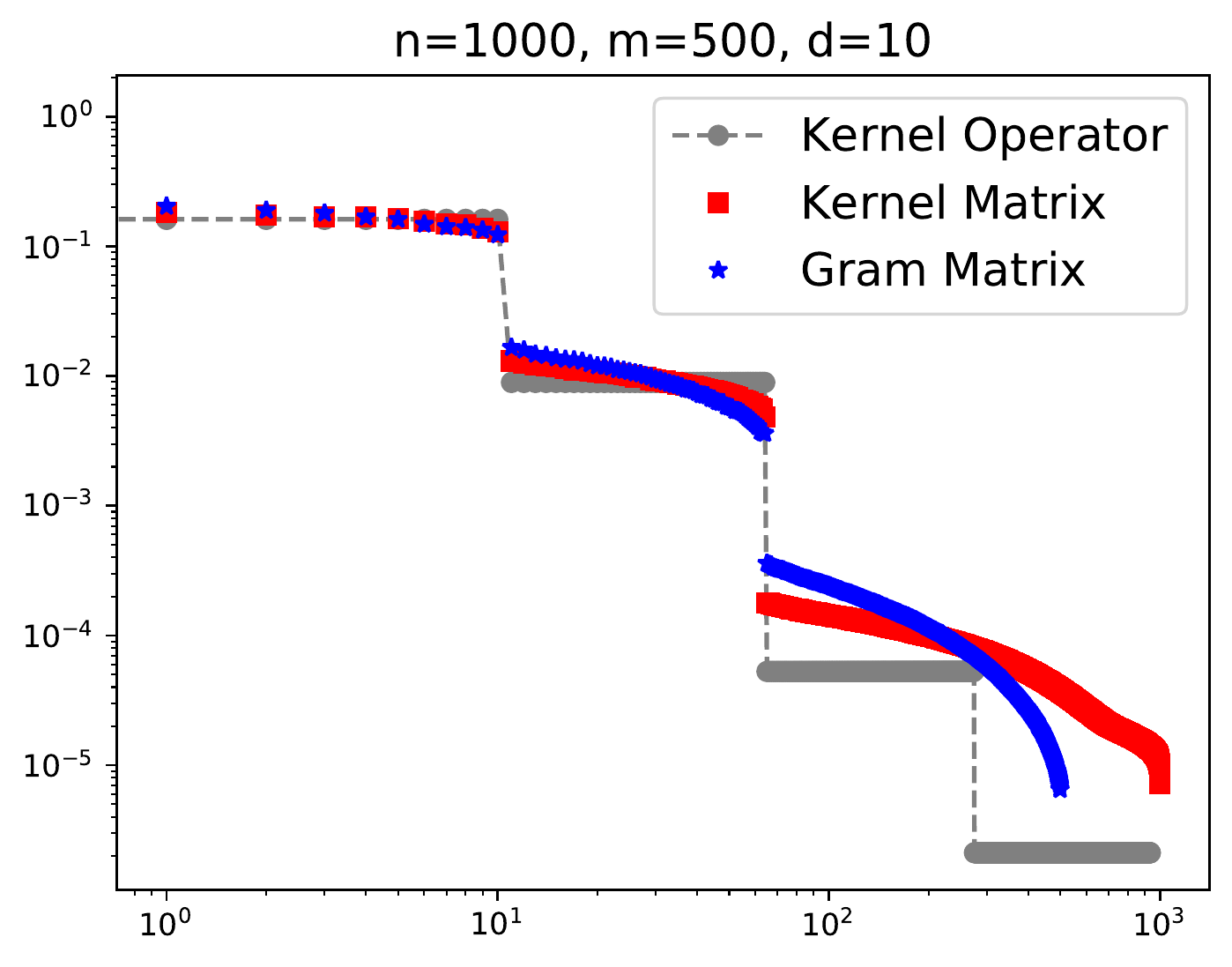}
}\hspace*{-2mm}
\subfigure[$\gamma=0.8$]{
\includegraphics[width=0.32\textwidth]{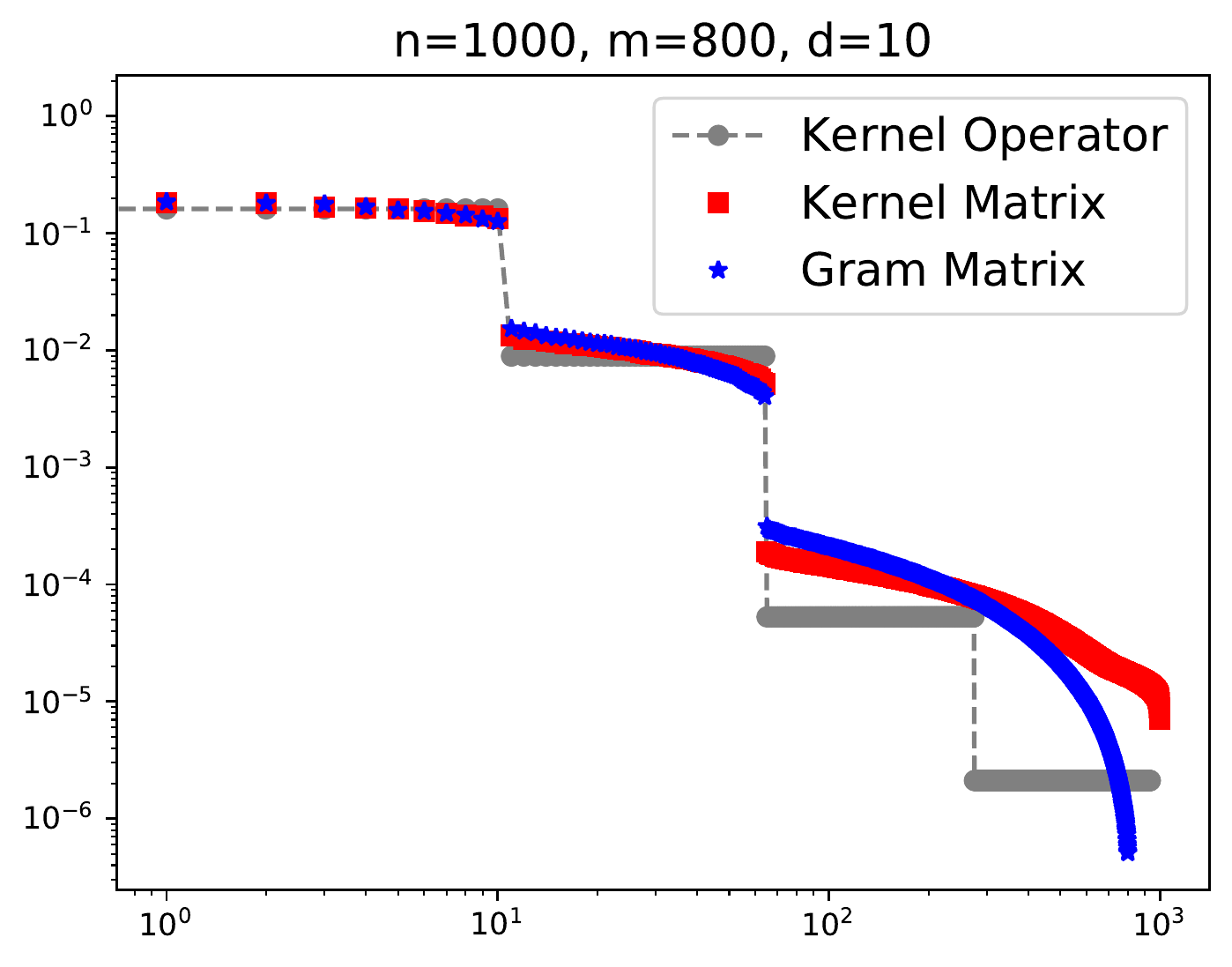}
}\hspace*{-2mm}
\subfigure[$\gamma=1.0$]{
\label{fig: spectrum-1}
\includegraphics[width=0.32\textwidth]{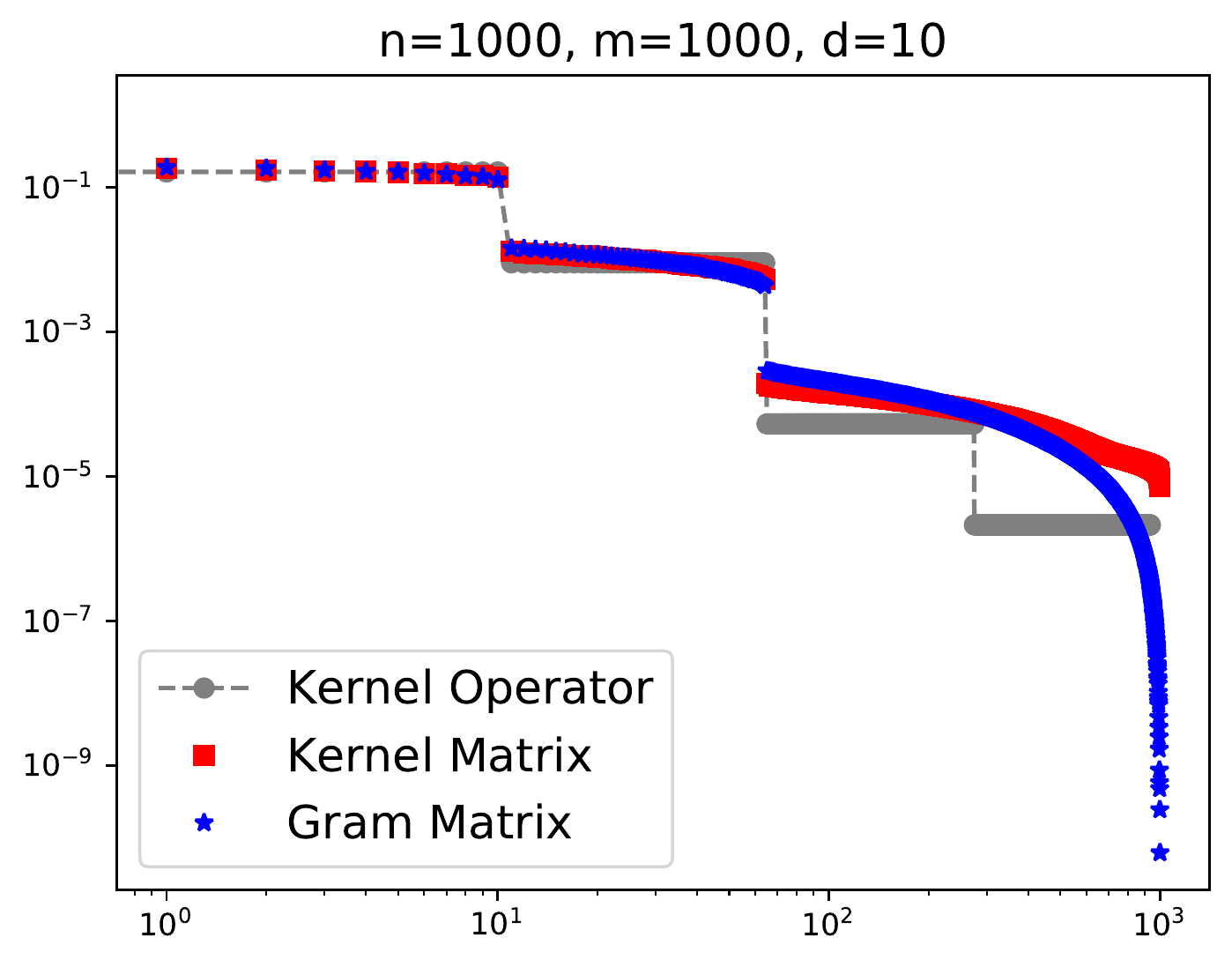}
}
\\
\subfigure[$\gamma=1.5$]{
    \includegraphics[width=0.32\textwidth]{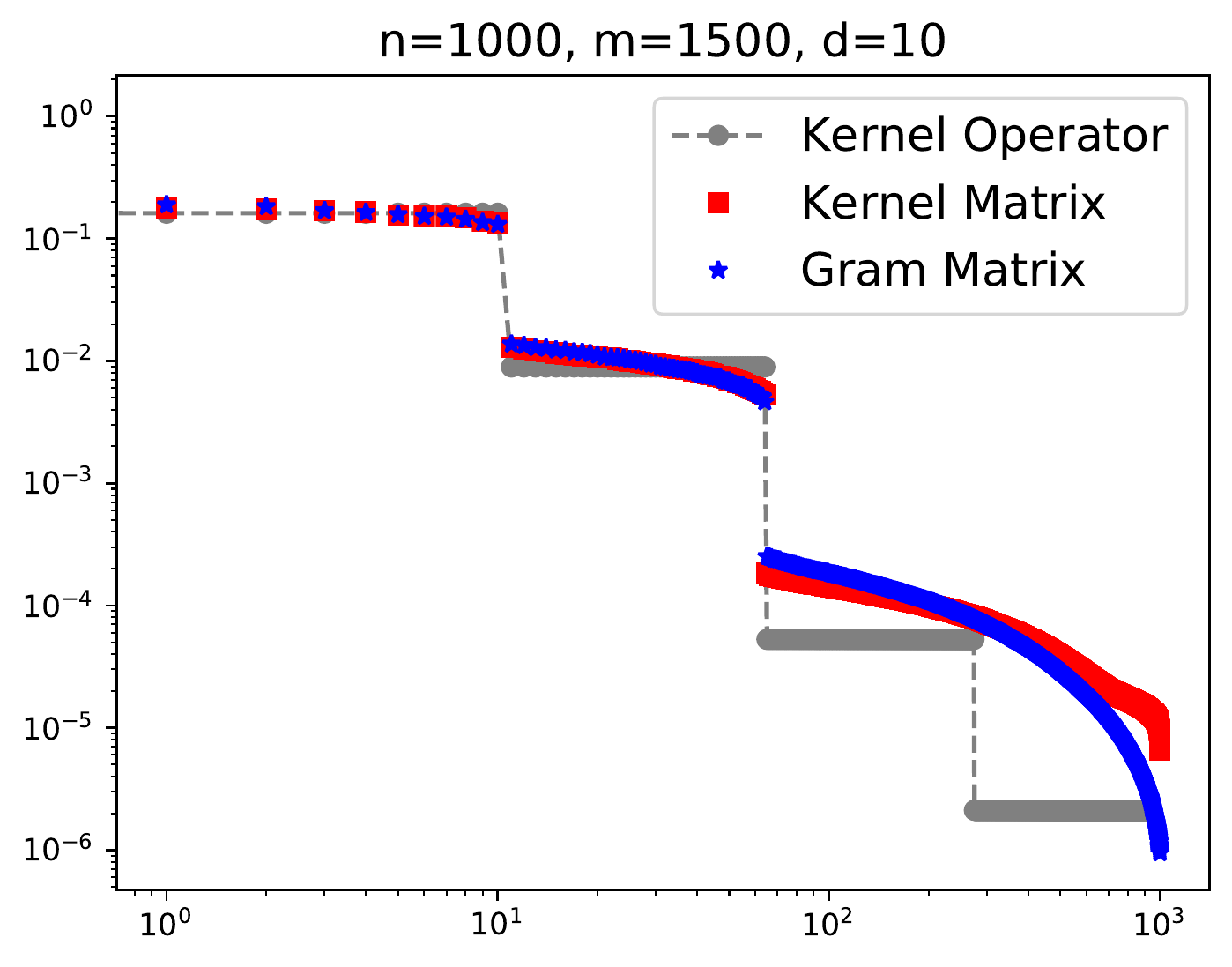}
}\hspace*{-2mm}
\subfigure[$\gamma=2.0$]{
\includegraphics[width=0.32\textwidth]{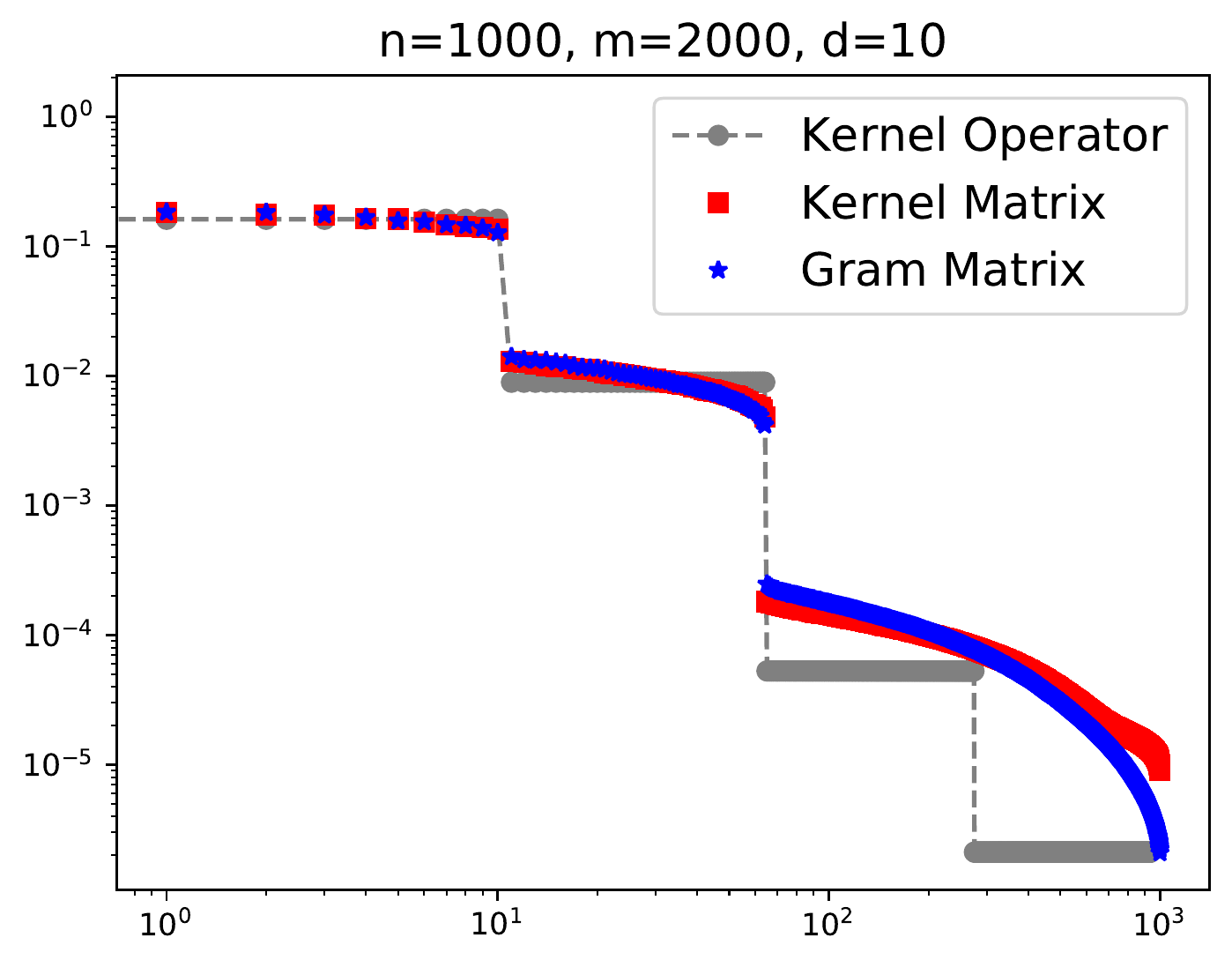}
}\hspace*{-2mm}
\subfigure[$\gamma=8.0$]{
\includegraphics[width=0.32\textwidth]{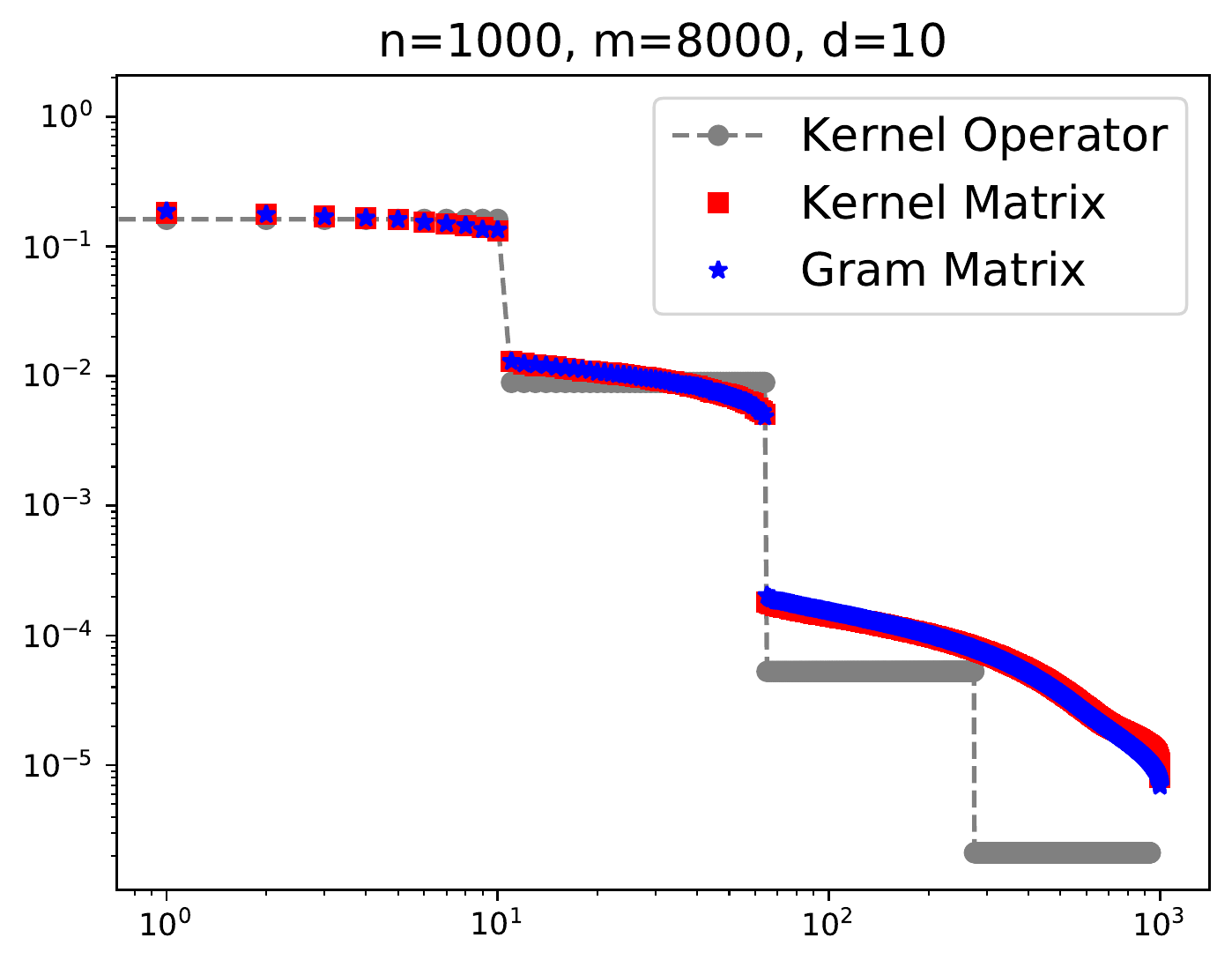}
}
\vspace*{-2mm}
\caption{\small Comparison of the spectra of the kernel operator, kernel matrix and the corresponding Gram matrix. We  see that  the spectrum of Gram matrix consist of large discrete spectrum and small ``continuous'' spectrum. The former are very close to the top eigenvalues  of the kernel operator, while the latter is rather random especially when $\gamma\approx 1$. Moreover, as $\gamma \to \infty$, the spectrum of the Gram matrix converges to the spectrum of the kernel matrix.}
\label{fig: spectrum}
\end{figure}

\begin{itemize}
\item The spectrum of the kernel operator is stage-like.

\item The large eigenvalues of kernel matrix approximate the corresponding eigenvalues of kernel operator very well. The small eigenvalues are away from the smallest eigenvalues of kernel operator, but still relatively large.

\item The spectrum of the Gram matrix is close to the spectrum of the kernel matrix when $\gamma$ is large. For example, the two spectra are almost indistinguishable at $\gamma=8$.

\item The spectrum of the Gram matrix at $\gamma = 1$ has  a very long tail at $0$, i.e. the tail is extremely small and have outliers close to zero. However, the tail of the spectrum for $\gamma\neq 1$ does not have outliers near  zero.
\end{itemize}
In the following, we provide some theoretical evidence why the spectrum of behave like this.

\subsection{The stage-like spectrum of the kernel operator}
For this special case, in the following we show that the spectrum of kernel operator $\cK$ has explicit formula.
Since $k(\bx,\bx')$ only depends on the inner product of its inputs, and $\bx$ are uniformly distributed on the sphere, we know that the eigenfunctions of $K$ are spherical harmonics, and the eigenvalues for spherical harmonics with the same frequency are the same. Let $\{\psi_j(\bx)\}_{j=0}^\infty$ be the sequence of spherical harmonics on $\bS^{d-1}$, with non-decreasing frequencies, and $\mu_j$ be the corresponding eigenvalues, then $k(\bx,\bx')$ can be represented as 
\begin{equation}
k(\bx,\bx') = \sum\limits_{j=0}^\infty \mu_j \psi_j(\bx)\psi_j(\bx').
\end{equation}
% Next, we assume the target function $f^*$ has the following decomposition,
% \begin{equation}
% f^*(\bx) = \sum\limits_{j=0}^\infty c_j\psi_j(\bx). 
% \end{equation}
% Then, we easily have $\|f^*\|_2 = \sum_j c_j^2$ and $\|f^*\|_{\cH_k}=\sum_j c_j^2/\mu_j$, where $\|\cdot\|_2$ is the $l_2$ norm and $\|\cdot\|_{\cH_k}$ means the RKHS norm. 

For $\cK$, we have the following theorem about its eigenvalues and eigenfunctions.
\begin{theorem}\label{thm:specK}
Let $\{Y_{n,i}: n\geq0,\ 1\leq i\leq N(d,n)\}$ be the spherical harmonics on $\bS^{d-1}$, with $N(d,0)=1$ and 
\begin{equation}
N(d,n) = \frac{2n+d-2}{n}\left(\begin{array}{c}
n+d-3 \\ n-1
\end{array}\right),
\end{equation}
for $n\geq1$. Then, $Y_{n,i}(\cdot)$ are eigenfunctions of $\cK$ for any $(n,i)$, and the corresponding eigenvalues only depend on $n$. Let the eigenvalue associated with $Y_{n,i}$ be $\lambda_n$, then we have 
\begin{equation}\label{eq:lam0}
\lambda_0 = \frac{2\sqrt{\pi}d\Gamma(\frac{d}{2})}{\Gamma(d)\Gamma(\frac{d-1}{2})},
\end{equation}
and $\lambda_n=C(d)\Lambda(d,n)$ for $n\geq1$, where
\begin{align}
C(d) &= 2^{\frac{d-5}{2}}\pi^{\frac{2d+3}{4}}d(d-2)\left[\frac{\Gamma(\frac{d-2}{2})\Gamma(d-1)}{\Gamma(\frac{d-1}{2})\Gamma(\frac{d}{2})}\right]^{\frac{1}{2}},\\
\Lambda(d,n) &= \frac{2^{n-\frac{1}{2}}\Gamma(\frac{n+d-2}{2})}{(n+d-3)!(n+d-1)!\Gamma(\frac{3-n}{2})^2\Gamma(\frac{n+d-1}{2})}. \label{eq:Lambda}
\end{align}
\end{theorem}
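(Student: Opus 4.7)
The plan is to reduce the problem to a one-dimensional computation via the Funk--Hecke formula. Since $k(\bx,\bx')$ depends only on the inner product $t=\bx^T\bx'$, we may write $k(\bx,\bx') = h(t)$ with $h(t)=\sqrt{1-t^2}+t(\pi-\arccos t)$, so $\cK$ is a zonal convolution operator on $\bS^{d-1}$. By the classical Funk--Hecke theorem, every spherical harmonic $Y_{n,i}$ is then an eigenfunction of $\cK$, and the corresponding eigenvalue depends only on the degree $n$:
\begin{equation*}
\lambda_n \;=\; \frac{|\bS^{d-2}|}{|\bS^{d-1}|}\int_{-1}^{1} h(t)\,\frac{C_n^{(d-2)/2}(t)}{C_n^{(d-2)/2}(1)}\,(1-t^2)^{(d-3)/2}\,dt,
\end{equation*}
where $C_n^{\alpha}$ is the Gegenbauer polynomial. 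This immediately establishes the structural claims of the theorem (that the $Y_{n,i}$ are eigenfunctions and that the eigenvalue is the same for each $i$ within a degree).

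For $n=0$, $C_0^\alpha\equiv 1$, and the task reduces to evaluating
\begin{equation*}
\int_{-1}^{1}(1-t^2)^{(d-2)/2}\,dt \;+\; \int_{-1}^{1} t(\pi-\arccos t)(1-t^2)^{(d-3)/2}\,dt.
\end{equation*}
The first integral is a Beta function $B(1/2,d/2)=\sqrt{\pi}\,\Gamma(d/2)/\Gamma((d+1)/2)$. For the second, I would substitute $t=\cos\theta$ and integrate by parts once, using $(\pi-\arccos t)'=(1-t^2)^{-1/2}$, so the antiderivative of $t(1-t^2)^{(d-3)/2}$ multiplies against a bounded factor and reduces to another Beta integral. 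Collecting Gamma-function identities (via the duplication formula) gives the closed form \eqref{eq:lam0}.

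For $n\ge 1$, the key tool is Rodrigues' formula for Gegenbauer polynomials, which lets us rewrite
\begin{equation*}
C_n^{(d-2)/2}(t)\,(1-t^2)^{(d-3)/2} \;=\; c_{d,n}\,\frac{d^n}{dt^n}\bigl[(1-t^2)^{n+(d-3)/2}\bigr],
\end{equation*}
with an explicit constant $c_{d,n}$ of Gamma functions. Integrating by parts $n$ times transfers the derivative onto $h$; the boundary terms all vanish because $(1-t^2)^{n+(d-3)/2}$ has a zero of sufficient order at $t=\pm1$. Now the crucial observation is that $h$ has the very clean derivative structure $h'(t)=\pi-\arccos t$, $h''(t)=(1-t^2)^{-1/2}$, and for $n\ge 2$, $h^{(n)}(t)$ is a polynomial in $t$ times $(1-t^2)^{1/2-(n-1)}$. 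Thus the transformed integrand becomes a product of two explicit powers of $(1-t^2)$ and a polynomial, which evaluates via Beta functions into a ratio of Gamma functions.

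The main obstacle is the bookkeeping of constants needed to reach exactly the formula \eqref{eq:Lambda}. In particular, the factor $\Gamma((3-n)/2)^2$ in the denominator is what forces $\lambda_n=0$ for odd $n\ge 3$ (where $\Gamma$ has poles), reflecting the parity symmetry of the ReLU arc-cosine kernel: only $n=1$ and even $n$ carry mass. I would verify this cancellation either by direct careful computation of $h^{(n)}$ and the boundary evaluations, or equivalently by matching against the well-known Cho--Saul / arc-cosine kernel expansion, which gives the same Gegenbauer series and provides an independent sanity check on the normalization constants appearing in $C(d)$ and $\Lambda(d,n)$.
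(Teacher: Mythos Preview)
Your proposal is correct and structurally identical to the paper's proof: both reduce to a one-dimensional Gegenbauer integral via the zonal/Funk--Hecke argument, and the paper likewise obtains $\lambda_0$ by direct integration. The only difference is in how the integrals $\int_{-1}^1 h(t)\,C_n^{(d-2)/2}(t)\,(1-t^2)^{(d-3)/2}\,dt$ are evaluated for $n\ge 1$: the paper simply splits $h$ into its two summands and quotes the closed forms of the two resulting Gegenbauer integrals from the Erd\'elyi tables, whereas you propose to derive them from scratch via Rodrigues' formula and $n$-fold integration by parts, exploiting $h''(t)=(1-t^2)^{-1/2}$. Your route is more self-contained and makes the vanishing for odd $n\ge 3$ transparent, at the cost of heavier bookkeeping; the paper's route is quicker but relies on tabulated identities. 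Either way the Gamma-function algebra leading to $C(d)\Lambda(d,n)$ is the same.
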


\begin{proof}
By the close form of the kernel~\eqref{eq:kernel_form}, the kernel $k(\bx,\bx')$ only depends on the inner product of $\bx$ and $\bx'$. Hence, by~\citep{xie2016diverse}, we know that spherical harmonics are eigenfunctions of $\cK$, and the eigenvalues are the same for spherical harmonics with the same order $n$. Hence, $K(\bx,\bx')$ has the following decomposition,
\begin{equation}\label{eq:expressionK}
K(\bx,\bx') = \sum\limits_{n=0}^\infty \lambda_n\sum\limits_{i=1}^{N(d,n)} Y_{n,i}(\bx)Y_{n,i}(\bx').
\end{equation}
A direct integral gives~\eqref{eq:lam0}. For $n\geq1$, let $P_n(t)$ be the Legendre Polynomials defined in~\citep{frye2012spherical}, then by Theorem 4.11 thereof, 
\begin{equation}\label{eq:expressionP}
P_n(\bx^T\bx') = \frac{\Omega_{d-1}}{N(d,n)}\sum\limits_{i=1}^{N(d,n)} Y_{n,i}(\bx)Y_{n,i}(\bx'),
\end{equation}
where $\Omega_{d-1}$ is the surface area of $\bS^{d-1}$. Hence, combining~\eqref{eq:expressionK} and~\eqref{eq:expressionP}, we have
\begin{align}
\lambda_n &= \frac{1}{\Omega_{d-1}}\int\int_{\bx,\bx'}K(\bx,\bx')P_n(\bx^T\bx')d\Omega_{d-1}d\Omega_{d-1} \\
  &= \frac{1}{\Omega_{d-1}}\int_{\bx}K(\bx,\bx')P_n(\bx^T\bx')d\Omega_{d-1} \\
  &= \frac{1}{\Omega_{d-1}}\int_{-1}^1 k(t)P_n(t)(1-t^2)^{\frac{d-3}{2}}dt, \label{eq:lamn}
\end{align} 
where we let $k(t)=\sqrt{1-t^2}+t(\pi-\arccos(t))$. By~\citep{frye2012spherical}, $P_n$ and $P_m$ are orthogonal when $m\neq n$, and 
\begin{equation}
\int_{-1}^1 (1-t^2)^{\frac{d-3}{2}}P_n(t)^2dt = \frac{\Omega_{d-1}}{\Omega_{d-2}N(d,n)}.
\end{equation}
Hence, if we let
\begin{equation}\label{eq:P_and_C}
C_n(t)=\left[ \frac{\Omega_{d-2}N(d,n)2^{3-d}\pi\Gamma(n+d-2)}{\Omega_{d-1}(n+\frac{d-2}{2})\Gamma(n+1)\Gamma(\frac{d-2}{2})^2} \right]^{\frac{1}{2}}P_n(t),
\end{equation}
then $C_n(t)$ are Gegenbauer polynomials defined through the generating function
\begin{equation}\label{eq:generating}
\sum\limits_{n=0}^\infty C_n(t)s^n = \frac{1}{(1-2st+s^2)^{\frac{d-2}{2}}}.
\end{equation}
According to~\eqref{eq:generating} and~\citep{erdlyi1954tables}, we can compute the following integrals,
\begin{equation}\label{eq:int_Gegen1}
\int_{-1}^1(1-t^2)^{\frac{d-2}{2}}C_n(t)dt = \frac{\pi^{3/2}2^{n-2}(d-2)\Gamma(\frac{n+d-2}{2})}{n!\Gamma(\frac{1-n}{2})\Gamma(\frac{3-n}{2})\Gamma(\frac{n+d+1}{2})},
\end{equation}
and 
\begin{equation}\label{eq:int_Gegen2}
\int_{-1}^1(1-t^2)^{\frac{d-3}{2}}t(\pi-\arccos t)C_n(t)dt=\frac{\pi^{3/2}2^{n-3}(d-2)(n^2+(d-2)n+1)\Gamma(\frac{n+d-2}{2})}{(n+d-1)n!\Gamma(\frac{3-n}{2})^2\Gamma(\frac{n+d+1}{2})}.
\end{equation}
Combining~\eqref{eq:int_Gegen1} and~\eqref{eq:int_Gegen2}, we have 
\begin{equation}\label{eq:int_Gegen3}
\int_{-1}^1(1-t^2)^{\frac{d-3}{2}}k(t)C_n(t)dt = \frac{\pi^{3/2}d(d-2)2^{n-2}\Gamma(\frac{n+d-2}{2})}{(n+d-1)^2n!\Gamma(\frac{3-n}{2})^2\Gamma(\frac{n+d-1}{2})}.
\end{equation}
Finally, combining~\eqref{eq:int_Gegen3} with~\eqref{eq:lamn} and~\eqref{eq:P_and_C} gives the results of $\lambda_n$.
\end{proof}

\begin{remark}
From Theorem~\ref{thm:specK} we know that the spectrum of $\cK$ is stage-like, and the stage becomes wider as the dimension $d$ gets larger. And when $d$ is fixed, the gap between stages is large. Specifically, by~\eqref{eq:Lambda} we have
\begin{equation}
\lambda_{n+2} = \frac{(n-1)^2}{(n+d-1)^2(n+d+1)(n+d)}\lambda_n \leq \frac{\lambda_n}{(n+d)^2}.
\end{equation}
\end{remark}

\subsection{The smallest eigenvalues}
To gain some insights about the small eigenvalues, we draw some inspiration from the well-known Marchenko-Pastur distribution, which characterizes the spectrum of random matrix $Y_n = \frac{1}{n}XX^T$, where $X\in\RR^{n\times m}$ with the entries being \textit{i.i.d.} random variables with mean $0$ and variance $1$. Let $\mu_n(\lambda) = \frac{1}{m}\sum_{j=1}^m \delta(\lambda-\lambda_j(Y_n))$ denote the spectrum of $Y_n$. Random matrix theory shows that $\mu_n$ converges to the the following Marchenko-Pastur (MP)~ \citep{marchenko1967distribution} distribution as $m,n\to \infty$, 
\[
\mu(\lambda) = \begin{cases}
(1-\frac{1}{\gamma}) \delta(\lambda) + v_{1/\gamma}(\lambda), & \text{if } \gamma>1 \\
v_{\gamma}(\lambda), & \text{if } \gamma \leq 1,
\end{cases}
\]
where 
\[
    v_{\gamma}(\lambda) = \frac{1}{2\pi} \frac{\sqrt{(\lambda_{+}-\lambda)(\lambda-\lambda_{-})}}{\lambda} \mathbf{1}_{\lambda\in [\lambda_{-}, \lambda_{+}]},
\]
and $\lambda_{\pm} = (1\pm\sqrt{\gamma})^2$. 

The smallest eigenvalue is given by $\lambda_{-}$.
When $\gamma=1$, we have $\lambda_{-}=0$ and $\mu(\lambda) \propto \frac{1}{\sqrt{\lambda}}$ at $\lambda\approx 0$. This implies that the smallest eigenvalues is zero and has a long tail. This is 
consistent with  the small part of the spectrum of the Gram matrix at $\gamma=1$ shown in Figure \ref{fig: spectrum-1}. 
Assuming that the MP distribution  can be used to characterize the tail, the smallest eigenvalue $\lambda_n(\gamma)$ of the Gram matrix at $\gamma\approx 1$ should
then  obey
 \begin{equation}\label{eqn: MP-prediction}
 \lambda_n(\gamma) \approx c_{n,d} 
     \begin{cases} 
     (1-\sqrt{\gamma})^2 &  \text{if } \gamma \leq 1 \\
     (1-\sqrt{1/\gamma})^2 & \text{if } \gamma > 1,
    \end{cases}
 \end{equation}
where $c_{n,d}$ is a constant that depends on $n,d$. The Figure \ref{fig: minium-eigvals} illustrates the difference between the true smallest eigenvalues and the prediction in \eqref{eqn: MP-prediction}. We see that the prediction is pretty accurate when $\gamma\approx 1$,.
% and  is also not that bad when $\gamma$ away from $1$.
\begin{figure}[!h]
\centering 
\includegraphics[width=0.5\textwidth]{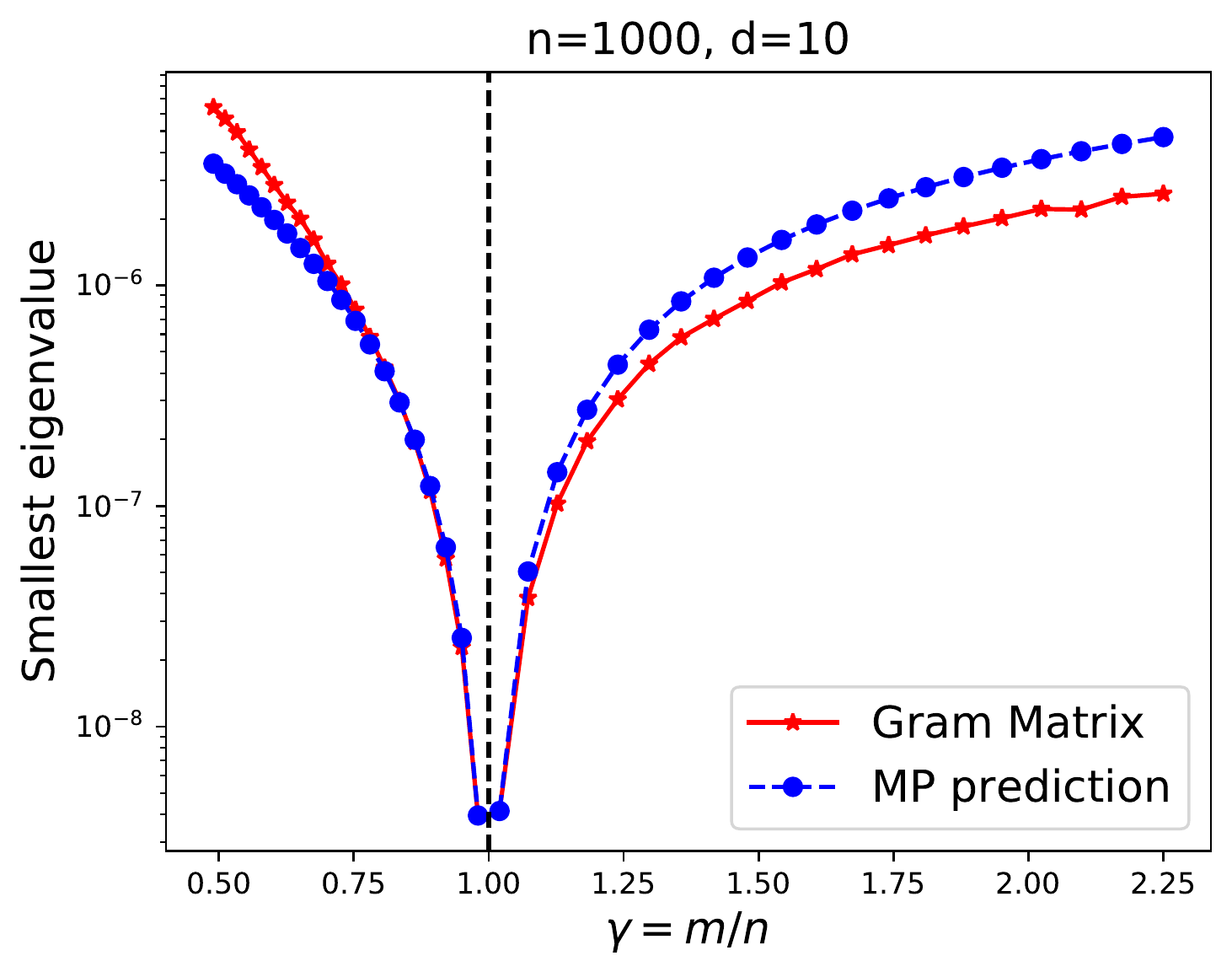} 
\caption{The smallest eigenvalues of the Gram matrices for different values of $\gamma$. In this experiment, $n=1000, d=10$, and for each $\gamma$, we report the mean values of $10$ independent experiments.}
\label{fig: minium-eigvals}
\end{figure}
\end{document}